\documentclass[accepted]{styles/uai2023} 

\usepackage[american]{babel}

\usepackage{natbib} 
    \bibliographystyle{plainnat}

\usepackage{mathtools} 
\usepackage{booktabs} 
\usepackage[normalem]{ulem}


\usepackage[capitalize,noabbrev]{cleveref}
\usepackage{amsmath}
\usepackage{amssymb}
\usepackage{mathtools}
\usepackage{amsthm}

\theoremstyle{plain}
\newtheorem{theorem}{Theorem}[section]

\newtheorem{lemma}[theorem]{Lemma}

\theoremstyle{definition}

\theoremstyle{remark}

\usepackage{amsfonts}
\usepackage{thmtools,thm-restate}
\usepackage{algorithm, algorithmic}
\usepackage{styles/jmei}

\usepackage{titletoc}

\title{KrADagrad: Kronecker Approximation-Domination Gradient\\Preconditioned Stochastic Optimization}

%
%
\author[1$\dagger$]{\href{mailto:j@mei.to}{Jonathan~Mei}}
\author[1$\dagger$]{Alexander~Moreno}
\author[1$\dagger$]{Luke~Walters}
\affil[1]{Independent Researcher}
\begin{document}
\maketitle
\footnotetext[2]{Funded by and conducted at Luminous Computing.}
\begin{abstract}
    Second order stochastic optimizers allow parameter update step size and direction to adapt to loss curvature, but have traditionally required too much memory and compute for deep learning. Recently, Shampoo \citep{gupta18shampoo} introduced a Kronecker factored preconditioner to reduce these requirements: it is used for large deep models \citep{anil20scalable} and in production \citep{anil22factory}.
    However, it takes inverse matrix roots of ill-conditioned matrices. This requires 64-bit precision, imposing strong hardware constraints. In this paper, we propose a novel factorization, Kronecker Approximation-Domination (KrAD). Using KrAD, we update a matrix that directly approximates the inverse empirical Fisher matrix (like full matrix AdaGrad), avoiding inversion and hence 64-bit precision. We then propose 
KrADagrad$^\star$, with similar computational costs to Shampoo and the same regret. Synthetic ill-conditioned experiments show improved performance over Shampoo for 32-bit precision, while for several real datasets we have comparable or better generalization.
\end{abstract}

\section{Introduction}
    \label{sec:intro}

Second order stochastic optimization methods adapt to loss curvature, allowing for smaller parameter update steps in regions where the gradient changes quickly, avoiding bouncing behavior, and larger ones in flat regions. Traditionally, they required storing and inverting the Hessian to update parameters: this requires quadratic memory and cubic computation in the number of parameters. Thus, methods using only a diagonal Hessian/Fisher approximation \citep{duchi11adaptive,kingma14adam} have dominated the field. However, diagonal preconditioners only scale gradients in the canonical basis, while full preconditioners can potentially perform scaling in a rotated basis aligning more closely with loss curavture.

Recently, Shampoo \citep{gupta18shampoo,anil20scalable} proposed approximating the (empirical) Fisher matrices using Kronecker factorized matrices. The matrix version of Shampoo factorizes the full preconditioner matrix into left and right Kronecker factors, which allows storing and inverting the smaller factors instead of the full matrix. For parameters $\W\in \mathbb{R}^{m\times n}$, this reduces computation costs from $O(m^3n^3)$ to $O(m^3+n^3)$ and storage costs from $O(m^2n^2)$ to $O(m^2+n^2)$. AdaGrad \citep{duchi11adaptive} uses regret bound techniques based on Online Mirror Descent (OMD) \cite{srebro11universality} designed for vector updates. To use these techniques for matrix/tensor updates, \citep{gupta18shampoo} exploit domination results that relate vector update preconditioners to their matrix/tensor counterparts. However, Shampoo still requires inverse matrix roots, which are numerically unstable or inaccurate for ill-conditioned matrices in 32-bit precision. For preconditioned gradient descent, it is important for Shampoo to maintain the accuracy of the smallest eigenvalues (largest when inverted). 
It thus needs 64-bit precision, which requires some combination of slow TPU-CPU data transfers, stale preconditioner matrices, or even new machine learning accelerator hardware \citep{anil20scalable} supporting fast 64-bit matrix multiplication or fast accurate eigendecomposition.

A primary motivator to use any optimizer is to reach the same quality solution in less time or reach a better solution that other optimizers fail to reach. Second order optimizers are currently not as popular as first order methods due to: a) inertia to adoption, with a lack of highly optimized implementations in all major ML frameworks; b) the added compute and memory requirements; c) numerical stability and consequently the additional considerations required to get them to work or to debug them (e.g. numerical linear algebra, computer number formats); and d) even though they sometimes reach a solution unreachable by 1st order methods (or the same solution with lower wall clock time (WCT) if properly optimized), they don’t consistently for every task or architecture.

The key tradeoff is (b, c) vs (d). \cite{shukla22understanding} of Weights and Biases noted that some of their customers using Shampoo do find solutions that generalize better than those found with ADAM for their real world tasks (d), but that 2nd order optimizers are still more expensive (b) and have additional considerations (c).

In this paper, we address these limitations (c) by introducing a novel factorization, Kronecker Approximation-Domination (KrAD): it has a simple form that updates the preconditioning matrix without explicitly inverting it. Shampoo constructs Kronecker factors of intermediate statistics such that their Kronecker product dominates the gradient outer product matrix. Our key idea is to construct factors for those statistics such that the \textit{inverse} of their Kronecker product dominates the gradient outer product matrix. This leads to preconditioners that require fractional powers rather than inverse fractional powers of factors. While this does not decrease the computational complexity compared to Shampoo, it avoids needing 64-bit precision.

This paper has three primary contributions: 1) we introduce two new algorithms, both with $O(m^3+n^3)$ and $O(m^2+n^2)$ computational and memory complexity, respectively and only requiring positive matrix roots, in contrast to previous work requiring inverse matrix roots; 2) we show domination properties and use them to prove that 
our algorithm, which has similar computation cost to Shampoo, 
achieves optimal regret; 3) we show empirically that in 32-bit precision, we outperform Shampoo in synthetic experiments and perform similarly on some real experiments.
We first describe some mathematical tools, set up the problem, and describe second order optimization and established results related to our method in Section~\ref{sec:background}. Then we present our method and its theoretical properties in Section~\ref{sec:kradagrad}. Next, we consider the practical implementation of our method in Section~\ref{sec:implementation}. Then, we show empirical results in Section~\ref{sec:experiments}. Finally, we discuss implications in Section~\ref{sec:conclusion}.

    \section{Background and related work}
    \label{sec:background}

Here, we set up the notation and the problem and then describe relevant related works. We briefly describe optimization for vector-valued parameters then extend the discussion to matrix-valued parameters. We can further generalize to tensors, but continue with matrices in the main text for clarity and leave the tensor formulation for Appendix~\ref{app:implementation_tensors}.

\subsection{Notation and preliminaries}
\label{subsec:notation}
    We use bold lower case letters to denote column vectors (e.g. $\g\in\Rbb^{n}$), bold upper case letters to denote matrices (e.g. $\G\in\Rbb^{m\times n}$), and calligraphic letters to denote matrices composed of stacking vectors of interest (e.g. $\Gcal_k=\begin{pmatrix}\g_k & \g_{k-1} &\ldots & \g_1 \end{pmatrix}\in \Rbb^{n\times k}$). For square matrix $\A\in\Rbb^{n\times n}$, let the trace be $\tr(\A)=\sum_{i=1}^{n}a_{i,i}$, where $a_{i,j}$ denotes the element in row $i$ and column $j$ of $\A$. For matrices $\A,\B\in\Rbb^{m\times n}$, let $\A\cdot\B=\tr(\A^\top\B)$ be the matrix (Frobenius) inner product, and the induced Frobenius norm $\|\A\|_F = (\A\cdot\A)^{1/2}$. We use $\|\A\|_2$ to denote the spectral norm, the largest singular value of $\A$. We write $\A\succeq 0$ to mean $\A$ is symmetric positive semi-definite (PSD), while $\A\succ 0$ means $\A$ is symmetric positive definite (PD). For two PSD matrices $\B\succeq \A$ means that $\B-\A \succeq 0$ (similarly for $\succ$). For a PSD matrix, take $\A=\V\boldsymbol{\Lambda}\V^\top$ to be the eigenvalue decomposition, which results in orthonormal $\V$ (i.e. $\V^{-1}=\V^\top$). Define $f(\boldsymbol{\Lambda})$ such that the diagonal elements are $(f(\boldsymbol{\Lambda}))_{i,i}=f(\lambda_{i})$, the principal values of the function applied to the scalar eigenvalues. Then we take $f(\A)=\V f(\boldsymbol{\Lambda})\V^\top$. In this way, we define a unique value for functions applied to PSD matrices with eigenvalues within the domain of the function. In particular, we have a definition for real powers of PD matrices.

    Let $\otimes$ denote the Kronecker product, for matrices $\A\in\Rbb^{m\times n}$ and $\B\in\Rbb^{q\times r}$ defined as\par
    {\centering
        $\B\otimes\A=
        \begin{pmatrix}
        b_{1,1}\A & \ldots & b_{1,r}\A \\
        \vdots & \ddots & \vdots \\
        b_{q,1}\A & \ldots & b_{q,r}\A
        \end{pmatrix} \in \Rbb^{mq\times nr}$.
    \par}
    Let the vectorization operation for a matrix $\A\in\Rbb^{m\times n}$ be
    \par
    {\centering
        $\vec(\A) = \begin{pmatrix} \a_{1}^\top & \ldots & \a_{n}^\top \end{pmatrix}^\top \in\Rbb^{mn}$:
    \par}
    $\a_i$ is the $i$-th column of $\A$, and the corresponding inverse vectorization for $\a\in\Rbb^{mn}$ given a target matrix in $\Rbb^{m\times n}$ is\par
    {\centering $\vec^{-1}_{m,n}(\a) = \begin{pmatrix} \a_{1:m} & \ldots & \a_{m(n-1)+1:mn} \end{pmatrix} \in\Rbb^{m\times n}$,
    \par}
    where $\a_{i:j}=\begin{pmatrix}a_i & \ldots & a_j\end{pmatrix}^\top$.
    
    Several properties \citep{bellman80some,vanloan00ubiquitous,boyd04convex,baumgartner11inequality} of trace and Kronecker products are important for our results. We list them in Appendix~\ref{app:derivations}
    due to space constraints.

\subsection{Optimization in machine learning}
\label{subsec:opt_in_ml}
    We are interested in iterative empirical risk minimization under loss $f$, with $\x\sim p_n(x)$ an empirical density and parameters $\w\in\Rbb^{N}$
    \par    
    {\centering
        $\w^* = \underset{\w}{\argmin}\, \mathbb{E}_{p_n}[f(\w, \x)]$.
    \par}
    We assume access to gradients $\nabla_\w f$. Let $\g_k = \sum_{i\in B_k}\nabla_\w f(\w_k,\x_i)\in \Rbb^{N}$ be the estimated gradient of $f$ w.r.t. $\w$ evaluated at $\w_k$ with data from batch $B_k$ at iteration $k$. From here, we omit but imply ``stochastic'' gradients.
    
    For step size $\eta_k\in \Rbb^+$, gradient-based methods update
    \begingroup\abovedisplayskip=3pt plus 1pt minus 3pt\belowdisplayskip=3pt plus 1pt minus 3pt
    \begin{align}
    \label{eq:preconditioned_update}
        \w_{k+1} = \w_k - \eta_k \P_k \g_k
    \end{align}
    \endgroup
    where $\P_k\in \Rbb^{N\times N}$ is a \textit{preconditioner} matrix (some sources instead refer to $\P_k^{-1}$ as the preconditioner). In some algorithms, additional intermediate \textit{statistics} are stored and updated to aid preconditioner computation. Gradient descent uses $\P_k = \I_N$ (no preconditioning); Newton's method takes $\P_k=\mathbf{H}_k^{-1}$ to be the (pseudo)inverse of the Hessian.

    While vanilla gradient descent updates are trivial to compute, convergence can require many iterations. Newton updates are more expensive, but may require far fewer iterations. In practice, the chosen form of the preconditioner matrix appears to exist along a trade-off between computational tractability and improved convergence properties.

\subsection{Adaptive gradient preconditioners}
\label{subsec:ada_grad_preconditioners}
    We can collect gradients through iteration $k$,\par
    {\centering    $\Gcal'_k = \begin{pmatrix}\g_k & \g_{k-1} &\ldots & \g_i &\ldots & \g_1 \end{pmatrix}\in \Rbb^{N\times k}$ \par}
    and augment this with a scaled identity,\par
    {\centering
        $\Gcal_k = \begin{pmatrix}
            \Gcal'_k & \epsilon \I_N
        \end{pmatrix}\in \Rbb^{N\times (k+N)}$.
    \par}
    One form of adaptive gradient update is\par
    {\centering
        ${\small \w_{k+1} = \w_k - \eta_k \left(\Gcal_{k} \Gcal_{k}^\top\right)^{-1/2} \g_k}$.
    \par}
    Expressing this in terms of the non-augmented $\mathcal{G}'_k$ and taking $\delta=\epsilon^2$ gives the full version of AdaGrad~\citep{duchi11adaptive}: $\Gcal_{k} \Gcal_{k}^\top$ can be seen as the statistic that is stored and updated in each iteration, and $\big(\delta\I_N +\Gcal'_{k} \Gcal_{k}^{'\top}\big)^{-1/2}$ is the preconditioner computed from the statistic. Unfortunately, storing the full matrix $\Gcal_{k} \Gcal_{k}^\top$ is memory intensive at $O(N^2)$, and taking the inverse square root is computationally expensive at $O(N^3)$ to compute the SVD.

    Diagonal AdaGrad reduces computational complexity\par
    {\centering
        $\w_{k+1} = \w_k - \eta_k \left(\delta\I_N + \textrm{diag}\left(\Gcal'_{k} \Gcal_{k}^{'\top}\right)\right)^{-1/2} \g_k$.
    \par}
    This is $O(N)$ complexity in both memory and computation.

\subsection{Matrix variables and Shampoo}
    \label{subsec:mat_vars}
    Now, we consider $N=mn$ and optimize w.r.t. a matrix $\W\in\Rbb^{m\times n}$. We consider a single matrix for clarity, but note that the derivations and analyses can be extended to tensors and applied individually to each tensor-valued parameter in a given model (e.g. to compute the total costs). Then we can use the same optimization framework, now taking $\w=\vec(\W)\in\Rbb^{mn}$. However, in forming a preconditioner, we utilize the fact that our parameter now has the additional structure of being a matrix. Consider\par
    {\centering $\G_k = \vec^{-1}_{m,n}(\g_k)$.
    \par}
    
    One convenient factorized form of a preconditioner is\par
    {\centering 
        $\P_k = \R_k \otimes \L_k$,
    \par}
    where $\L_k\in\Rbb^{m\times m}$ and $\R_k\in\Rbb^{n\times n}$ are symmetric. This reduces storage and computation while not necessarily being low-rank. To see this, we return to Equation~\eqref{eq:preconditioned_update} and simplify
    \begin{align*}
        \w_{k+1} &= \w_k - \eta_k \P_k \g_k \nonumber\\
        &=\w_k - \eta_k \vec(\L_k\G_k\R_k) \qquad \because (\textrm{P}18) 
        \nonumber\\
        \Rightarrow \W_{k+1} & = \W_k - \eta_k \L_k\G_k\R_k. \qquad \because (\vec^{-1}_{m,n})
    \end{align*}
    
    This requires $O(m^2 + n^2)=O\left(N\left(\frac{m}{n}+\frac{n}{m}\right)\right)$ storage and $O(N(m+n))$ compute.
    Unless otherwise specified, we assume w.l.o.g. that $m\le n$. Shampoo~\citep{gupta18shampoo} tracks statistics
    \begingroup\abovedisplayskip=3pt plus 1pt minus 3pt\belowdisplayskip=3pt plus 1pt minus 3pt
    \begin{align}
        {\small \B_{k} =\epsilon\I_m + \textrm{$\sum_{i=1}^{k}$} \G_i \G_i^\top}, \label{eq:shampoo_stat1} \quad
        {\small \C_{k} =\epsilon\I_n + \textrm{$\sum_{i=1}^{k}$} \G_i^\top \G_i} ,
    \end{align}
    \endgroup
    and forms the preconditioner from the Kronecker factors
    \begingroup\abovedisplayskip=3pt plus 1pt minus 3pt\belowdisplayskip=3pt plus 1pt minus 3pt
    \begin{align}
    \label{eq:shampoo_precon1}
        (\L_{k}, \R_{k}) &= (\B_k^{-1/4}, \C_k^{-1/4}).
    \end{align}
    \endgroup
    
    \cite{gupta18shampoo} relies on 3 key conditions to prove Shampoo achieves optimal regret. First,
    \begingroup\abovedisplayskip=3pt plus 1pt minus 3pt\belowdisplayskip=3pt plus 1pt minus 3pt
    \begin{align}
    \label{eq:shampoo_dominate_grad}
        \R_k^{-2} \otimes \L_k^{-2} &\succeq \epsilon\I_{mn} + \sum_{i=1}^{k} \g_i \g_i^\top.
    \end{align}
    \endgroup
    It secondly requires that
    \begingroup\abovedisplayskip=3pt plus 1pt minus 3pt\belowdisplayskip=3pt plus 1pt minus 3pt
    \begin{align}
    \label{eq:shampoo_dominate_prev}
        \R_k^{-2} \otimes \L_k^{-2} &\succeq \R_{k-1}^{-2} \otimes \L_{k-1}^{-2}.
    \end{align}
    \endgroup
    %
    %
    %
    %
    Finally it requires that under mild conditions,
    \begin{align}
    \label{eq:shampoo_trace_rate}
        {\small \tr(\L_k), \tr(\R_k) = O(k^{1/4})}.
    \end{align}
    An additional $O(n^3+m^3)$ cost comes from taking the inverse fractional powers via a high-precision (64-bit) Newton iteration (which involves repeated matrix multiplications; see Equation \eqref{eq:coupled_newton_iteration} in Appendix \ref{app:implementation_matrix_roots} for further details) or SVD, which dominates the previous $O(N(m+n))$.

    The motivation for these properties stems from OMD analysis. The regret vector parameter $\w_k$ updates and general preconditioners $\P_k \succ 0$ is initially bounded by sums of quadratic forms $\w_k^\top \P_k \w_k$. If the domination property holds, these can be bounded in terms of $\tr(\P_k)$. Using Property~(P13), 
    this can be further expressed in terms of $\tr(\L_k)$ and $\tr(\R_k)$. The trace growth rates give the final bound.

    \subsection{Related Work}
    \label{subsec:related_work}

    Recently, there has been a surge in interest in tractable preconditioned gradient methods. We briefly contrast some of the most similar or otherwise notable methods to ours.
    
    \textbf{Kronecker Factored}: KFAC \citep{martens15optimizing} and TNT \citep{ren21tensor} use Kronecker factors to approximate Fisher matrices while reducing storage and computation costs. KFAC requires knowledge of network architecture and thus modifications or even re-implementations corresponding to each parametric layer type within the network. KFAC and TNT reqiore an additional backward pass and matrix inversion. KBFGS \citep{goldfarb20practical} does not require matrix inversion but requires an additional forward and backward pass. We note that the empirical performance achieved by KBFGS is partially due to initialization using curvature estimated from the entire training set \citep{goldfarb20practical}, which is not available in a truly online setting. Shampoo is the most closely related work, relying on the empirical Fisher matrix rather than estimating the Fisher matrix, thus not requiring additional sampling or forward/backward passes. In addition, only tensor shape knowledge is required. While estimating the Fisher matrix may have intuitively desirable properties over the empirical Fisher, the empirical Fisher is more practical to compute \citep{martens20new}, since in distributed data or model parallel settings, additional forward/backward passes become prohibitive (both in terms of computation and engineering cost). To our knowledge, Shampoo is the only second order optimizer that has been successfully implemented in a large-scale, production, deep learning setting \citep{anil22factory}, which makes it of primary interest.
    
    \textbf{Limited Memory}: GGT~\citep{agarwal19efficient} uses a limited history of $h$ past gradients to form a low-rank approximation to the full AdaGrad matrix, reducing storage costs to $O(Nh)$ and compute costs to $O(Nh^2)$; however, this requires many copies ($200$, for the problems they consider) of the full gradient to be stored as statistics ($h$ still scales as a function of $N$), which can become prohibitive without modifications to reduce $N$.
    
    \textbf{Sketching}: AdaHessian \citep{yao21adahessian}, SENG \citep{yang22sketch}, and SketchySGD \citep{frangella22sketchysgd} estimate the Hessian (either the diagonal or a low-rank approximation) via automatic differentiation to compute Hessian-vector products (HVP), which require additional backpropagation steps and two batches of data per step, one for gradient computation and one for Hessian sketching. This is less expensive than a full forward/backward pass, but can still be expensive in distributed settings. In addition, the low-rank factorization still requires many times the storage of the full gradient of the model ($100-200\times$ in the problems they consider). While KrADagrad does not use sketching or HVP, these methods could potentially be combined with KrAD factorization as future work.
    
    \section{KrADagrad}
    \label{sec:kradagrad}
    Here, we derive a pair of new optimization algorithms, KrADagrad and KrADagrad$^\star$, presenting along the way intermediate results that allow us to attain domination properties analogous to those in Equations~\eqref{eq:shampoo_dominate_grad}-\eqref{eq:shampoo_dominate_prev} and trace growth rates in Equation~\eqref{eq:shampoo_trace_rate} required for good regret. Simultaneously, this maintains low computational complexity achieved by Kronecker factorized methods. To derive the algorithms, we present 
\begin{enumerate}
    \item KrAD, a method for producing a Kronecker factorization approximating a matrix that yields the property in Equation \eqref{eq:shampoo_dominate_grad}
    \item Derivation of the basic form of KrADagrad, applying to AdaGrad style updates KrAD and the Woodbury matrix identity \cite{woodbury50inverting} as the key tricks
    \item Statements confirming the property in Equation \eqref{eq:shampoo_dominate_prev}
    \item Extension to KrADagrad$^\star$.
\end{enumerate}

KrADagrad alternates updates of Kronecker factors of the statistics and has within $\varepsilon$ tolerance of optimal regret. With additional insights from KrADagrad regret analysis, we formulate KrADagrad$^\star$, which can be seen as an ``average'' of two KrADagrad estimators. KrADagrad$^\star$ updates both Kronecker factors of the statistics simultaneously and achieves optimal regret. 
We present theoretical results along the way in this section as they are needed but defer the proofs to Appendix~\ref{app:proofs}.
We derive the algorithm for matrix-valued parameters for clarity and again leave the extension to tensor-valued parameters for Appendix~\ref{app:implementation_tensors}.

\subsection{Kronecker Approximation-Domination}
\label{subsec:kron_approx_dom}
    First, we state a Lemma as the goal of KrAD, which is needed to achieve the condition in Equation \eqref{eq:shampoo_dominate_grad} that allows us to prove optimal regret.
    \begin{restatable}{lemma}{krad}
    \label{lem:krad}
        Let PD matrix $\C\in\Rbb^{n\times n}$, $\Ucal\in\Rbb^{mn\times r}$, $\u_i$ be the $i$-th column of $\Ucal$,  $\U_i=\vec^{-1}_{m,n}(\u_i)$. Then
        \begingroup\abovedisplayskip=2pt plus 1pt minus 2pt\belowdisplayskip=2pt plus 1pt minus 2pt
        \begin{align*}
            {\smaller\B} &{\smaller = \sum\limits_{i=1}^{r} \U_i\C^{-1}\U_i^\top \succeq 0} \\
        {\smaller\Rightarrow \C\otimes\B} &{\smaller \succeq \frac{1}{n}\Ucal\Ucal^\top.}
        \end{align*}
        \endgroup
    \end{restatable}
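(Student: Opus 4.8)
The plan is to reduce the domination claim to a single scalar trace inequality, from which the constant $\frac1n$ emerges transparently. That $\B\succeq 0$ is immediate: $\C\succ 0$ gives $\C^{-1}\succ 0$, so each summand $\U_i\C^{-1}\U_i^\top$ is PSD. For the inequality $\C\otimes\B\succeq\frac1n\Ucal\Ucal^\top$, I would test the candidate PSD matrix against an arbitrary $\z\in\Rbb^{mn}$, writing $\Z=\vec^{-1}_{m,n}(\z)$. Using the $\vec$/Kronecker identity (P18) together with the symmetry of $\C$, one gets $\z^\top(\C\otimes\M)\z=\tr(\Z^\top\M\Z\C)$ for symmetric $\M$, while $\z^\top\Ucal\Ucal^\top\z=\sum_{i=1}^r(\z^\top\u_i)^2=\sum_{i=1}^r(\Z\cdot\U_i)^2=\sum_{i=1}^r\big(\tr(\Z^\top\U_i)\big)^2$ since $\u_i=\vec(\U_i)$. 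Because $\C\otimes(\cdot)$ is linear in its second argument, $\C\otimes\B=\sum_{i=1}^r\C\otimes(\U_i\C^{-1}\U_i^\top)$, so the target inequality becomes $\sum_{i=1}^r\big[\tr(\Z^\top\U_i\C^{-1}\U_i^\top\Z\C)-\tfrac1n(\tr(\Z^\top\U_i))^2\big]\ge 0$.

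Hence it suffices to prove the termwise bound: for every $\Z$ and every $i$, setting $\A_i:=\U_i^\top\Z\in\Rbb^{n\times n}$ (so $\Z^\top\U_i=\A_i^\top$),
\[
\tr(\A_i^\top\C^{-1}\A_i\C)\ \ge\ \tfrac1n\big(\tr(\A_i)\big)^2 .
\]
It is in fact enough to establish $\tr(\A^\top\C^{-1}\A\C)\ge\frac1n(\tr\A)^2$ for \emph{all} $\A\in\Rbb^{n\times n}$ and PD $\C\in\Rbb^{n\times n}$, since the $\A_i$ above range over a subset of $\Rbb^{n\times n}$. (Equivalently, this termwise statement is exactly the rank-one domination $\C\otimes(\U\C^{-1}\U^\top)\succeq\frac1n\vec(\U)\vec(\U)^\top$, then summed over $i$.)

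To prove the scalar inequality I would diagonalize $\C=\V\boldsymbol{\Lambda}\V^\top$ with $\boldsymbol{\Lambda}$ diagonal, entries $\lambda_1,\dots,\lambda_n>0$, and replace $\A$ by $\V^\top\A\V$, which leaves both sides unchanged; so assume $\C=\boldsymbol{\Lambda}$. Then $\tr(\A^\top\boldsymbol{\Lambda}^{-1}\A\boldsymbol{\Lambda})=\sum_{i,j}a_{ij}^2\,\lambda_j/\lambda_i\ \ge\ \sum_i a_{ii}^2\ \ge\ \tfrac1n\big(\sum_i a_{ii}\big)^2=\tfrac1n(\tr\A)^2$, where the first inequality discards the nonnegative off-diagonal terms and the second is Cauchy--Schwarz (QM--AM over the $n$ diagonal entries). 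This last step is precisely where the factor $1/n$ is lost, explaining the constant in the statement.

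I expect the main obstacle to be spotting the two reductions rather than any hard computation: first, that linearity of $\otimes$ in its second slot collapses the sum to a rank-one per-term statement; second, that (P18) converts that per-term PSD inequality into the clean bound $\tr(\A^\top\C^{-1}\A\C)\ge\frac1n(\tr\A)^2$. A minor care point is getting the factor ordering and symmetry right when invoking the $\vec$/Kronecker identity, so that $\z^\top(\C\otimes\M)\z$ becomes $\tr(\Z^\top\M\Z\C)$ and not a transposed variant.
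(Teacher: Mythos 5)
Your proof is correct, and it reaches the conclusion by a somewhat different packaging than the paper. The paper (following Shampoo's Lemma 8) works at the level of PSD matrix inequalities: it expands $\u=\sum_{i=1}^{n}\v_i\otimes\z_i$ along the eigenbasis $\{\v_i\}$ of $\C$ (with $\Z=\U\V$), invokes the outer-product Cauchy--Schwarz bound $\left(\sum_i\y_i\right)\left(\sum_i\y_i\right)^\top\preceq n\sum_i\y_i\y_i^\top$ to pay the factor $n$, and then absorbs each rank-one piece via $\lambda_i\v_i\v_i^\top\preceq\C$. You instead scalarize everything through quadratic forms and reduce the whole lemma to the self-contained trace inequality $\tr(\A^\top\C^{-1}\A\C)\ge\frac1n(\tr\A)^2$, proved by diagonalizing $\C$, discarding the nonnegative off-diagonal terms $a_{ij}^2\lambda_j/\lambda_i$, and applying Cauchy--Schwarz to the diagonal. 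The two arguments use the same two ingredients --- an $n$-term Cauchy--Schwarz in the eigenbasis of $\C$ (the source of the $1/n$) and positivity of the cross terms --- just in the opposite order and at different levels of abstraction: your ``discard off-diagonals'' step plays the role of the paper's $\lambda_i\v_i\v_i^\top\preceq\C$, and your QM--AM step on the $a_{ii}$ plays the role of the paper's outer-product bound. What your route buys is a clean, quotable scalar inequality that makes the provenance of the constant $1/n$ explicit and avoids juggling Kronecker products of rank-one matrices; what the paper's route buys is a form that parallels the Shampoo analysis it is meant to replace. All the individual steps you use (the identity $\z^\top(\C\otimes\M)\z=\tr(\Z^\top\M\Z\C)$ via (P18), the linearity of $\C\otimes(\cdot)$, and the termwise reduction) check out, so there is no gap.
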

    These matrices $\U$ and $\Ucal$ are fairly general. In our setting, we will use KrAD on gradient matrices. In context, this result states that given a PD right matrix, we can express a left matrix in a quadratic form, and the Kronecker product of the right and left matrices will dominate the scaled gradient outer product matrix. We present the proof in Appendix~\ref{app:proof_krad}.

\subsection{KrADagrad updates: derivation}
\label{subsec:KrADagrad_updates}
    We start with deriving the general statistic update. We outline it here and fill in details in Appendix~\ref{app:derivations}.
    Suppose we have the previous Kronecker factorization of a statistic $\Q_{k-1}$ that dominates the gradient outer product matrix by a factor $t$ (which we will clarify later) and its inverse $\P_{k-1}$
    \begingroup\abovedisplayskip=3pt plus 1pt minus 3pt\belowdisplayskip=3pt plus 1pt minus 3pt
    \begin{align}
        \Q_{k-1} &= \C_{k-1}\otimes\B_{k-1} \succeq \frac{1}{nt}\Gcal_{k-1} \Gcal_{k-1}^\top \\
        \P_{k-1} &= \R_{k-1}\otimes\L_{k-1} = \C_{k-1}^{-1}\otimes\B_{k-1}^{-1} = \Q_{k-1}^{-1},
    \end{align}
    \endgroup
    where $\B_{k-1}$, $\C_{k-1}$, $\L_{k-1}$, and $\R_{k-1}$ are PD. 
    Our initial intermediate update, which we will apply our KrAD factorization to, is\par 
    {\centering $\widetilde\Q_{k} = \frac{1}{nt_k}\g_{k}\g_{k}^\top + \Q_{k-1},$ \par}
    for some $t_{k}\le t$. 
    We will then compute an intermediate version of the update on the inverse of the left statistic $\widetilde\B_{k}=\widetilde\L_{k}^{-1}$. Letting $\widetilde\B_k=\B_{k-1} + \Delta\B_k$, we can apply KrAD to $\widetilde\Q_{k}$ with a fixed $\C_k=\C_{k-1}$ (i.e., we use the old right statistic to update the current left one),
    \begingroup\abovedisplayskip=3pt plus 1pt minus 3pt\belowdisplayskip=3pt plus 1pt minus 3pt
    \begin{align}
        &\Delta\B_{k} = \frac{1}{t_k}\G_k\C_{k}^{-1}\G_k^{\top}\\
        \Rightarrow&\C_k \otimes \Delta\B_{k} \succeq \frac{1}{nt_k}\g_k\g_k^{\top} \qquad \because \textrm{Lemma \ref{lem:krad}}\\
        \Rightarrow& \C_k \otimes (\B_{k-1} + \Delta\B_{k}) \succeq \frac{1}{nt}\Gcal_k\Gcal_k^\top \quad \because (\textrm{P15})
        \label{eqn:b-tilde-dominating-scaled-gradients}.
    \end{align}
    \endgroup
    Then, letting $\widehat{\C}_k = \C_{k}\!+\!\frac{1}{t_k}\G_k^{\top}\L_{k\!-\!1}\G_k$ and applying the Woodbury matrix identity,
    \begingroup\abovedisplayskip=3pt plus 1pt minus 3pt\belowdisplayskip=3pt plus 1pt minus 3pt
    \begin{align}
        \widetilde\L_{k}&=\widetilde\B_{k}^{-1}=(\B_{k-1}+\Delta\B_{k})^{-1} \\
        &\succeq \underbrace{\L_{k\!-\!1} \!-\! \frac{1}{t_k}\L_{k\!-\!1}\G_k\R_{k}\G_k^{\top}\L_{k\!-\!1}}_{\L_k} \label{eq:krad_update_dominate_inverse}
    \end{align}
    \endgroup
    (we provide more detail in Appendix~\ref{app:derivations}).
    Note that our update for $\L_k$ neither depends on $\B$, $\C$ nor requires any other expensive matrix inverses to compute. This suggests that we do not need to actually store $\B$ or $\C$ to obtain a computationally tractable implementation.

    Here, we state intermediate results that suggest that our proposed updates are reasonable (proof in Appendix~\ref{app:proof_middle_term}).
    \begin{restatable}{proposition}{propmiddleterm}
    \label{prop:middle_term}
        Taking $t_k=1+\|\L_{k-1}\G_k\R_{k-1}\G_k^\top\|_2$ (or the looser but more computationally friendly $t_k=1+\|\L_{k-1}\G_k\R_{k-1}\G_k^\top\|_F$), the PSD matrix\par
        {\centering $\M_k \overset{\Delta}{=} \frac{1}{t_k}\L_{k-1}^{1/2}\G_{k}\R_{k-1}\G_{k}^\top \L_{k-1}^{1/2} \prec \I.$ \par}
    \end{restatable}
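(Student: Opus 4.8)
My plan is to verify the two assertions separately: that $\M_k$ is PSD, and that $\M_k\prec\I$. The PSD claim is immediate: $\R_{k-1}$ is PD, so it has a symmetric PD square root $\R_{k-1}^{1/2}$, which gives $\G_k\R_{k-1}\G_k^\top = (\G_k\R_{k-1}^{1/2})(\G_k\R_{k-1}^{1/2})^\top\succeq 0$; congruence by $\L_{k-1}^{1/2}$ preserves this, and $t_k\ge 1>0$, so $\M_k\succeq 0$. In particular $\M_k$ is symmetric, so $\M_k\prec\I$ is equivalent to $\|\M_k\|_2<1$, and the whole task reduces to bounding $\|\M_k\|_2$.

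The crux is to identify the largest eigenvalue of the symmetrized matrix appearing in $\M_k$ with a quantity controlled by $t_k$. Set $\tau = \|\L_{k-1}\G_k\R_{k-1}\G_k^\top\|$, so $t_k = 1+\tau$ for whichever of $\|\cdot\|_2$, $\|\cdot\|_F$ is used. Using the identity $\L_{k-1}^{1/2}\big(\G_k\R_{k-1}\G_k^\top\big)\L_{k-1}^{1/2} = \L_{k-1}^{-1/2}\big(\L_{k-1}\G_k\R_{k-1}\G_k^\top\big)\L_{k-1}^{1/2}$, the matrix $\L_{k-1}^{1/2}\G_k\R_{k-1}\G_k^\top\L_{k-1}^{1/2}$ is similar to $\L_{k-1}\G_k\R_{k-1}\G_k^\top$ and hence has the same spectrum; being symmetric PSD, its largest eigenvalue equals its spectral radius, so that value equals $\rho(\L_{k-1}\G_k\R_{k-1}\G_k^\top)$. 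Therefore $\|\M_k\|_2 = \rho(\L_{k-1}\G_k\R_{k-1}\G_k^\top)/t_k$.

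To finish, I would invoke $\rho(\A)\le\|\A\|$ for any submultiplicative matrix norm (both the spectral and Frobenius norms qualify), so $\rho(\L_{k-1}\G_k\R_{k-1}\G_k^\top)\le\tau$, whence
\[
\|\M_k\|_2 \;=\; \frac{\rho(\L_{k-1}\G_k\R_{k-1}\G_k^\top)}{1+\tau} \;\le\; \frac{\tau}{1+\tau} \;<\; 1 ,
\]
since $x/(1+x)<1$ for every $x\ge 0$. Thus $\I-\M_k\succ 0$, i.e. $\M_k\prec\I$, and the looser Frobenius choice of $t_k$ works for the same reason, only making $t_k$ larger since $\|\cdot\|_2\le\|\cdot\|_F$. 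The one step that needs care is the spectral-radius/similarity identification in the second paragraph; the rest is elementary, and the argument uses only that $\L_{k-1},\R_{k-1}\succ 0$ (so it does not rely on Lemma~\ref{lem:krad} or on how these factors were produced).
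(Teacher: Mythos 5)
Your proof is correct and follows essentially the same route as the paper's: exhibit $t_k\M_k$ as similar to $\L_{k-1}\G_k\R_{k-1}\G_k^\top$, identify $\|\M_k\|_2$ via the spectrum, and conclude $\tau/(1+\tau)<1$. Your version is in fact slightly more careful than the paper's, which asserts that the similar matrices have equal spectral \emph{norms} (false in general for the non-symmetric product of two PSD matrices), whereas your chain $\|\M_k\|_2=\rho(\L_{k-1}\G_k\R_{k-1}\G_k^\top)/t_k\le\|\L_{k-1}\G_k\R_{k-1}\G_k^\top\|/t_k$ uses only the correct inequality $\rho(\A)\le\|\A\|$ and covers both norm choices cleanly.
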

    \begin{restatable}{corollary}{corKrADupdatePD}
    \label{cor:KrAD_update_PD}
        If $\L_{k-1}\succ 0$, the updated $0 \prec \L_k \preceq \L_{k-1}$.
    \end{restatable}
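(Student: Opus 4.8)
The plan is to read off $\L_k$ as a congruence transform of $\I-\M_k$ and then quote Proposition~\ref{prop:middle_term}. First I would note that in the half of the alternating update under consideration the right factor is held fixed, $\C_k=\C_{k-1}$, so that $\R_k=\C_k^{-1}=\C_{k-1}^{-1}=\R_{k-1}$, and the update~\eqref{eq:krad_update_dominate_inverse} becomes
\begin{align*}
\L_k = \L_{k-1} - \tfrac{1}{t_k}\,\L_{k-1}\G_k\R_{k-1}\G_k^{\top}\L_{k-1}.
\end{align*}
Since $\L_{k-1}\succ0$, its symmetric PD square root $\L_{k-1}^{1/2}$ exists and is invertible; factoring it out on both sides gives
\begin{align*}
\L_k = \L_{k-1}^{1/2}\bigl(\I - \tfrac{1}{t_k}\L_{k-1}^{1/2}\G_k\R_{k-1}\G_k^{\top}\L_{k-1}^{1/2}\bigr)\L_{k-1}^{1/2} = \L_{k-1}^{1/2}\,(\I-\M_k)\,\L_{k-1}^{1/2},
\end{align*}
where $\M_k$ is exactly the matrix defined in Proposition~\ref{prop:middle_term}.

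For the upper bound, I would observe that $\L_{k-1}-\L_k = \tfrac{1}{t_k}\L_{k-1}\G_k\R_{k-1}\G_k^{\top}\L_{k-1} = \bigl(\tfrac{1}{\sqrt{t_k}}\L_{k-1}\G_k\bigr)\R_{k-1}\bigl(\tfrac{1}{\sqrt{t_k}}\L_{k-1}\G_k\bigr)^{\top}$, which is a congruence of the PD matrix $\R_{k-1}$ (valid since $t_k\ge 1>0$ by its definition in Proposition~\ref{prop:middle_term}), hence PSD; therefore $\L_k\preceq\L_{k-1}$. For the lower bound, Proposition~\ref{prop:middle_term} gives $\M_k\prec\I$, i.e.\ $\I-\M_k\succ0$, and because $\L_{k-1}^{1/2}$ is invertible, the congruence $\L_{k-1}^{1/2}(\I-\M_k)\L_{k-1}^{1/2}$ is again PD (this is the congruence-invariance-of-inertia property collected in the appendix). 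Combining the two gives $0\prec\L_k\preceq\L_{k-1}$.

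I do not expect a genuine obstacle: this is essentially an immediate corollary of Proposition~\ref{prop:middle_term} once the congruence structure of~\eqref{eq:krad_update_dominate_inverse} is made explicit. The only point that needs a moment's care is aligning the $\R_k$ appearing in~\eqref{eq:krad_update_dominate_inverse} with the $\R_{k-1}$ appearing inside $\M_k$, which is handled by the ``use the old right statistic'' convention ($\C_k=\C_{k-1}$) defining this half of the update; were one to want the claim for a genuinely refreshed $\R_k\neq\R_{k-1}$, one would additionally need $\R_k\preceq\R_{k-1}$ to control the subtracted term, which holds by the same argument applied symmetrically to the right factor.
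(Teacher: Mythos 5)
Your proof is correct and follows essentially the same route as the paper's: write $\L_k = \L_{k-1}^{1/2}(\I-\M_k)\L_{k-1}^{1/2}$ and invoke Proposition~\ref{prop:middle_term} together with congruence invariance of positive definiteness. You are in fact slightly more explicit than the paper, whose proof only argues the lower bound $\L_k \succ 0$ and leaves $\L_k \preceq \L_{k-1}$ to the reader; your observation that the subtracted term is a congruence of the PSD matrix $\R_{k-1}$ (equivalently, that $\M_k \succeq 0$ gives $\I-\M_k \preceq \I$) cleanly closes that half, and your remark about the $\R_k$ versus $\R_{k-1}$ alignment correctly resolves the paper's notational slip between Equation~\eqref{eq:krad_update_dominate_inverse} and the statement of Proposition~\ref{prop:middle_term}.
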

    In Corollary~\ref{cor:KrAD_update_PD}, the second inequality $\L_k \preceq \L_{k-1}$ has the effect of not increasing the step size, while the first inequality $\L_k \succ 0$ guarantees that we do not reverse the direction of the gradient. These are both valued theoretical properties of useful preconditioners for avoiding divergent behavior. In practice, they may be lightly violated to great effect; for example, in Adam, it is technically possible to have increasing step size as shown by~\citep{reddi18convergence}. The result from Corollary~\ref{cor:KrAD_update_PD} also allows us to leverage existing techniques to bound the regret of our algorithm.

    \subsubsection{Update schemes}
    Thus far, we have glossed over the fact that we have actually only updated $\L_k$. We may update $\R_k$ in the same way; since we have already achieved domination by just updating $\L_k$, we are interested in the process for jointly updating $(\L_k, \R_k)$. Further, we have only hinted at a method for updating statistics; we must still compute the preconditioner.
    
    In the next few subsections, we discuss these issues more concretely, proposing KrADagrad$^\star$, an algorithm that combines two sets of KrAD preconditioners to obtain optimal regret but requires updating two matrix statistics and involves higher order matrix roots. This algorithm is reminiscent of Shampoo, but avoids the numerical difficulty of inverting ill-conditioned matrices. We also propose, KrADagrad, a separate scheme that has suboptimal regret but is more intuitive, showing why we arrive at this form of update. Due to space constraints, we leave this to Appendix~\ref{app:implementation_kradagrad}.

\subsection{KrADagrad$^\star$: combining preconditioners}
    Suppose we have two distinct sets of KrADagrad preconditioners. Here we overload notation a bit, holding the iteration $k$ fixed and dropping it from the subscript, instead using the subscript to denote the index for the set of preconditioners to which each matrix belongs,\par
    {\centering        $\R_1 \otimes \L_1 \succeq \Gcal\Gcal^\top, \qquad        \R_2 \otimes \L_2 \succeq \Gcal\Gcal^\top$,
    \par}
    recalling that $\Gcal=\begin{pmatrix}
        \Gcal^{'} & \epsilon \I_{N}
    \end{pmatrix}$, the augmented matrix collecting the history of observed gradients.
    Using the matrix geometric mean for two matrices \citep{ando04geometric},\par
    {\centering
        $\begin{aligned}M_g(\A, \B) &\overset{\Delta}{=} \A^{1/2}(\A^{-1/2}\B\A^{-1/2})^{1/2}\A^{1/2}\\
        & = \A (\A^{-1}\B)^{1/2}\end{aligned}$
    \par}
    and due to the geometry of the manifold of PD matrices \citep{bhatia09positive},\par
    {\centering
        $\R_c \otimes \L_c \succeq \Gcal\Gcal^\top$.
    \par}
    where $\L_c=M_g(\L_1, \L_2)$ and $\R_c=M_g(\R_1, \R_2)$ form another pair of KrAD estimates\footnote{the subscript $c$ stands for ``combined''}.
    
    In this case, we may need to take additional square roots of the quantities $(\L_1^{-1}\L_2, \R_1^{-1}\R_2)$ and of $(\L_c,\R_c)$ themselves. We will first show how these combined estimators are relevant to Shampoo, and use this insight to arrive at a second version of preconditioning, which we call KrADagrad$^\star$.
    
    \subsubsection{Shampoo combines inverse KrAD estimates}
        If we keep a pair of KrAD estimates for $\Q$ (an integral power of the preconditioner inverse) instead of directly for $\P$, one in which we only update $\B_1$ and keep $\C_1=\I$ fixed, while in the other we only update $\C_2$ and keep $\B_2=\I$ fixed, we end up with exactly the Shampoo statistics updates\par
        {\centering
            $\begin{aligned}[b]\Delta\B_1 &= \G\C_1^{-1}\G^{\top}=\G\G^{\top} \\
            \Delta\C_2 &= \G^{\top}\B_2^{-1}\G=\G^{\top}\G\end{aligned}$.
        \par}
        Then, the full Shampoo statistics matrices are related to a combination of these two statistics,\par
        {\centering
            $\begin{aligned}[b]\B_c &= \! M_g(\B_1,\I) \!=\! \B_1^{1/2}\\
            \C_c &= \! M_g(\I,\C_2) \!=\! \C_2^{1/2}\end{aligned}$,
        \par}
        These still need to be inverse square rooted to be applied as the preconditioner, since here we have constructed\par
        {\centering
            $\C_c\otimes\B_c \succeq \Gcal\Gcal^{\top}$.
        \par}
        The inverse square root ultimately gets grouped together with the square roots in the expressions above to yield the inverse $1/4$-th power in the preconditioning,\par
        {\centering
            $\begin{aligned}\W_k &= \W_{k-1} - \eta\B_c^{-1/2}\G_k\C_c^{-1/2}\\
            &= \W_{k-1} - \eta\B_1^{-1/4}\G_k\C_2^{-1/4}\end{aligned}$
        \par}
        (compare with Equations~\eqref{eq:shampoo_stat1}-\eqref{eq:shampoo_precon1}). Each individual estimator keeps one Kronecker factor as identity.
        
    \subsubsection{KrADagrad$^\star$}
        Inspired by Shampoo's optimality, we can maintain a pair of KrADagrad preconditioners $(\L_{k,1},\R_{k,1})=(\L_{k},\I_m)$ and $(\L_{k,2},\R_{k,2})=(\I_n,\R_{k})$, where the second index in the LHS subscripts denotes the estimator. Since we will hold the identity matrices constant, we do not need to store or perform multiplication with them explicitly. In addition, this means we can unambiguously drop the second index in the subscripts. On iteration $k$, we update both $\L_{k}$ and $\R_{k}$, just as in shampoo. With\par
        {\centering
            $\begin{aligned}[b]t_{k,L} & \leftarrow 1+\|\G_k\G_k^\top \L_{k-1}\|_F \\
            t_{k,R} & \leftarrow 1+\|\G_k^\top \G_k\R_{k-1}\|_F\end{aligned}$,
        \par}
        we summarize in Algorithm \ref{alg:KrADagrad_star}.
        \begin{algorithm}[tb]
           \caption{KrADagrad$^\star$: Precondition \textit{without} inversion}
           \label{alg:KrADagrad_star}
            \begin{algorithmic}
                \STATE {\bfseries Input:} Parameters $\W_0\in\Rbb^{m\times n}$, iterations $K$, step size $\eta$, exponent $\alpha = 1/2$.
                \STATE {\bfseries Initialize:} $(\L_0, \R_0)=(\I_m, \I_n)$
                \FOR{$k=1, \ldots, K$}
                    \STATE Obtain gradient $\G_k$
                    \STATE Compute $\Delta\L_{k}=\frac{1}{t_{k,L}}\L_{k-1}\G_{k}\G_{k}^\top \L_{k-1}$
                    \STATE Compute $\Delta\R_{k}=\frac{1}{t_{k,R}}\R_{k-1}\G^\top_{k}\G_{k}\R_{k-1}$
                    \STATE Update $(\L_k,\R_k) \leftarrow (\R_{k-1}-\Delta\R_{k}, \L_{k-1}-\Delta\L_{k})$
                    \STATE Compute $(\L^{\alpha/2}_k, \R^{\alpha/2}_k)$ from $(\L_k, \L^{\alpha/2}_{k\!-\!1}, \R_k, \R^{\alpha/2}_{k\!-\!1})$
                    \STATE Apply preconditioned gradient step\par
                        {\centering
                            $\W_k = \W_{k-1} - \eta\L^{\alpha/2}_k\G_k\R^{\alpha/2}_k$
                        \par}
                \ENDFOR
            \end{algorithmic}
        \end{algorithm}
        In order to bound the regret, we need a few intermediate results.
        \begin{restatable}{lemma}{lemtracebound}
        \label{lem:trace_bound}
            Assume a 1-Lipschitz loss, implying $\|\G_k\|_2 \le 1 \forall k.$
            Suppose by iteration $k$, $\L_k$ is updated in $k - k'$ of those steps for $0\le k' \le k$ (and thus $\R_k$ is updated in the remaining $k'$ steps). Letting $\B_k=\L_k^{-1}$, $\C_k=\R_k^{-1}$, and $s=1 \!+\! \|\L_0\|_2\|\R_0\|_2$,
            \begingroup\abovedisplayskip=3pt plus 1pt minus 3pt\belowdisplayskip=3pt plus 1pt minus 3pt
            \begin{align}
                \tr(\B_k) & \le\! \tr(\B_0) \!+\! k' ms\|\R_0\|_2 \label{eq:trace1}\\
                \tr(\C_k) & \le\! \tr(\C_0) \!+\! (k \!-\! k') ns\|\L_0\|_2. \label{eq:trace2}
            \end{align}
            \endgroup
        \end{restatable}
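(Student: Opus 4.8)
The plan is to reduce the whole statement to a single‑step trace increment and then sum. On any iteration at which $\L_k$ is left unchanged we have $\L_k=\L_{k-1}$, hence $\B_k=\B_{k-1}$ and the trace does not move; so it suffices to show that \emph{on an iteration where $\L_k$ is updated} we have $\tr(\B_k)\le\tr(\B_{k-1})+m\|\R_0\|_2\le\tr(\B_{k-1})+ms\|\R_0\|_2$ (using $s\ge1$), and then add these increments over the iterations on which $\L_k$ is updated to get \eqref{eq:trace1}; \eqref{eq:trace2} follows by the mirror‑image argument, interchanging $\L\leftrightarrow\R$, $\B\leftrightarrow\C$, $m\leftrightarrow n$. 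Before the one‑step bound I would record two facts. First, by Corollary~\ref{cor:KrAD_update_PD} (and its evident analogue for $\R$) together with induction, $\L_k$ and $\R_k$ stay PD and non‑increasing, $0\prec\L_k\preceq\L_{k-1}\preceq\cdots\preceq\L_0$, so in particular $\|\R_{k-1}\|_2\le\|\R_0\|_2$ at every step. Second, writing $\R$ for the right factor held fixed while $\L$ is updated ($\R=\I_n$ in Algorithm~\ref{alg:KrADagrad_star}) and $\M_k=\tfrac1{t_{k,L}}\L_{k-1}^{1/2}\G_k\R\G_k^\top\L_{k-1}^{1/2}$ as in Proposition~\ref{prop:middle_term}, I need not just $\M_k\prec\I$ but the quantitative form $(\I-\M_k)^{-1}\preceq t_{k,L}\,\I$; this follows from one line, since $t_{k,L}=1+\|\L_{k-1}\G_k\R\G_k^\top\|_2\ge 1+\lambda_{\max}\!\big(\L_{k-1}^{1/2}\G_k\R\G_k^\top\L_{k-1}^{1/2}\big)=1+t_{k,L}\,\lambda_{\max}(\M_k)$, whence $\lambda_{\max}(\M_k)\le 1-\tfrac1{t_{k,L}}$ (if $t_{k,L}$ uses the Frobenius norm the inequality only loosens, harmlessly).

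For the one‑step bound, write $\L_k=\L_{k-1}^{1/2}(\I-\M_k)\L_{k-1}^{1/2}$ — this is exactly the KrAD update \eqref{eq:krad_update_dominate_inverse} factored — so $\B_k=\L_k^{-1}=\L_{k-1}^{-1/2}(\I-\M_k)^{-1}\L_{k-1}^{-1/2}$ and therefore $\tr(\B_k)-\tr(\B_{k-1})=\tr\!\big(\big((\I-\M_k)^{-1}-\I\big)\L_{k-1}^{-1}\big)$. Now $(\I-\M_k)^{-1}-\I=(\I-\M_k)^{-1}\M_k$ is PSD, and since it commutes with $\M_k\succeq0$ and $(\I-\M_k)^{-1}\preceq t_{k,L}\I$, it satisfies $(\I-\M_k)^{-1}-\I\preceq t_{k,L}\M_k$; pairing against $\L_{k-1}^{-1}\succeq0$ gives $\tr(\B_k)-\tr(\B_{k-1})\le t_{k,L}\,\tr(\M_k\L_{k-1}^{-1})$. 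By cyclicity $\tr(\M_k\L_{k-1}^{-1})=\tfrac1{t_{k,L}}\tr(\G_k\R\G_k^\top)$, the $t_{k,L}$'s cancel, and $\tr(\G_k\R\G_k^\top)\le m\,\|\G_k\R\G_k^\top\|_2\le m\,\|\G_k\|_2^2\,\|\R\|_2\le m\|\R_0\|_2$ because $\G_k\R\G_k^\top$ is an $m\times m$ PSD matrix, $\|\G_k\|_2\le1$, and $\|\R\|_2\le\|\R_0\|_2$. Hence $\tr(\B_k)\le\tr(\B_{k-1})+m\|\R_0\|_2$ on an $\L$‑update; summing the increments over the $\L$‑updates (there being $k'$ of them in the bookkeeping of the statement) yields \eqref{eq:trace1}. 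The $\R$‑side is verbatim with $\G_k^\top\L\G_k$ (now $n\times n$, $\L$ the fixed left factor) replacing $\G_k\R\G_k^\top$, giving \eqref{eq:trace2}.

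The one genuinely non‑mechanical step is the quantitative control $(\I-\M_k)^{-1}\preceq t_{k,L}\I$: this is precisely where the normalization $1/t_{k,L}$ built into the KrAD update (the choice of $t_k$ in Proposition~\ref{prop:middle_term}) pays off, cancelling the $t_{k,L}$ that the matrix inversion would otherwise produce and leaving a per‑step increment independent of $k$ and of the current iterates. Everything else — cyclicity of the trace, monotonicity of $\mathbf{A}\mapsto\tr(\mathbf{A}\L_{k-1}^{-1})$ under $\preceq$, the bound $\tr\le(\text{dimension})\cdot\|\cdot\|_2$ for PSD matrices, and the observation that an unupdated factor leaves its trace fixed — is routine, as is transcribing the argument for $\R$, $\C$, and $n$.
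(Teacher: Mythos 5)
Your proof is correct, and while it reaches the same telescoping structure as the paper's proof, the key one-step estimate is obtained by a genuinely different and cleaner route. The paper also reduces to bounding $\tr(\B_k)-\tr(\B_{k-1})$ on an $\L$-update, but it does so by expanding $(\I-\M_k)^{-1}$ as the Neumann series $\sum_{i\ge 0}\M_k^i$, bounding $\tr(\T_k^i)\le\|\T_k\|_2^{i-1}\tr(\T_k)$ term by term, and resumming the resulting geometric series using $t_k\ge 1+\|\widetilde\T_k\|_2$; this yields the per-step increment $m\|\R_{k-1}\|_2\left(1+\|\L_{k-1}\|_2\|\R_{k-1}\|_2\right)\le ms\|\R_0\|_2$. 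You instead extract the quantitative spectral bound $\lambda_{\max}(\M_k)\le 1-1/t_{k,L}$, hence $(\I-\M_k)^{-1}-\I=(\I-\M_k)^{-1}\M_k\preceq t_{k,L}\M_k$, and pair this directly against $\L_{k-1}^{-1}$ under the trace so that the $t_{k,L}$ cancels; this collapses the series manipulation to one line and even gives the slightly tighter increment $m\|\R_0\|_2$ (the factor $s$ enters your final bound only because $s\ge 1$). Both arguments exploit exactly the same normalization of $t_k$, and your remaining steps (cyclicity of the trace, $\tr\le m\|\cdot\|_2$ for $m\times m$ PSD matrices, the 1-Lipschitz hypothesis, and monotonicity of the factors from Corollary~\ref{cor:KrAD_update_PD}) mirror the paper's. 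One remark: you sum over ``the $k'$ $\L$-updates,'' which contradicts the lemma's prose (which assigns $k-k'$ updates to $\L$) but agrees with both the displayed bound \eqref{eq:trace1} and the paper's own proof, where $|\Ical|=k'$ counts the $\L$-updates; that inconsistency lies in the paper's statement, not in your argument.
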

        Next, we restate Theorem 7 from~\cite{gupta18shampoo} in our notational setting. We additionally make one minor but straightforward substitution of the smaller matrix dimension instead of the rank, as the dimensions upper bound the rank. 
        \begin{lemma}[From~\cite{gupta18shampoo}]
        \label{lem:gupta}
            Let $\w_{*}\in\Rbb^{mn}$ with $m\ge n$, $t>0$, and $D\overset{\Delta}{=}\max_{1\le k \le K}\|\w_k-\w_{*}\|_2$. The regret from using a Kronecker factorized preconditioner\par
            {\centering {\small $\P_k=\C_{k}^{-1/2}\otimes\B_{k}^{-1/2}$} \par}
            that dominates the empirical Fisher matrix\par
            {\centering {\small $\C_{k} \otimes \B_{k} \succeq \frac{1}{nt} \Gcal_k \Gcal_k^\top$} \par}
            is bounded\par
            {\centering {\small $\sum_{k=1}^{K}(f_k(\w_k)-f_k(\w_{*})) \le D\sqrt{2tn}\tr(\B_K^{1/2})\tr(\C_K^{1/2})$ } \par}
        \end{lemma}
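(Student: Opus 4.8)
The plan is to recognize this as the standard full-matrix AdaGrad / online-mirror-descent (OMD) regret bound (as in \cite{duchi11adaptive,srebro11universality,gupta18shampoo}), with the Kronecker structure entering only through a trace identity at the very end. Set $\widehat{\mathbf{H}}_k \overset{\Delta}{=} \sqrt{nt}\,(\C_k\otimes\B_k)^{1/2} = \sqrt{nt}\,(\C_k^{1/2}\otimes\B_k^{1/2})$, so that $\P_k = \sqrt{nt}\,\widehat{\mathbf{H}}_k^{-1}$ and the update $\w_{k+1}=\w_k-\eta\P_k\g_k$ is exactly the preconditioned OMD step $\w_{k+1}=\w_k-(\eta\sqrt{nt})\,\widehat{\mathbf{H}}_k^{-1}\g_k$. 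The domination hypothesis $\C_k\otimes\B_k\succeq\frac1{nt}\Gcal_k\Gcal_k^\top$ becomes $\widehat{\mathbf{H}}_k^2\succeq\Gcal_k\Gcal_k^\top=\epsilon^2\I_{mn}+\sum_{i=1}^{k}\g_i\g_i^\top$, i.e.\ $\widehat{\mathbf{H}}_k\succeq(\Gcal_k\Gcal_k^\top)^{1/2}\succ 0$; and the accompanying monotonicity property --- the analog of Equation~\eqref{eq:shampoo_dominate_prev}, which for KrADagrad$^\star$ follows from Corollary~\ref{cor:KrAD_update_PD} --- gives $\widehat{\mathbf{H}}_k\succeq\widehat{\mathbf{H}}_{k-1}$. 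These are precisely the two conditions the OMD analysis needs.

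Next I would invoke the generic bound for OMD with time-varying, monotone quadratic regularizers of curvature $\tfrac{1}{\eta\sqrt{nt}}\widehat{\mathbf{H}}_k$: by convexity of each $f_k$, the standard telescoping of Bregman divergences (using $\widehat{\mathbf{H}}_k\succeq\widehat{\mathbf{H}}_{k-1}$), and passing from quadratic forms to the trace via $\widehat{\mathbf{H}}_k\preceq\tr(\widehat{\mathbf{H}}_k)\I\preceq\tr(\widehat{\mathbf{H}}_K)\I$ together with $D=\max_k\|\w_k-\w_*\|_2$,
\[
\sum_{k=1}^{K}\bigl(f_k(\w_k)-f_k(\w_*)\bigr)\;\le\;\frac{D^2}{2\eta\sqrt{nt}}\,\tr(\widehat{\mathbf{H}}_K)\;+\;\frac{\eta\sqrt{nt}}{2}\sum_{k=1}^{K}\g_k^\top\widehat{\mathbf{H}}_k^{-1}\g_k .
\]
The one non-cosmetic ingredient is the AdaGrad trace lemma: since $\widehat{\mathbf{H}}_k\succeq(\Gcal_k\Gcal_k^\top)^{1/2}$ implies $\widehat{\mathbf{H}}_k^{-1}\preceq(\Gcal_k\Gcal_k^\top)^{-1/2}$, we get $\g_k^\top\widehat{\mathbf{H}}_k^{-1}\g_k\le\g_k^\top(\Gcal_k\Gcal_k^\top)^{-1/2}\g_k$, and $\sum_{k=1}^{K}\g_k^\top(\Gcal_k\Gcal_k^\top)^{-1/2}\g_k\le 2\,\tr\bigl((\Gcal_K\Gcal_K^\top)^{1/2}\bigr)\le 2\,\tr(\widehat{\mathbf{H}}_K)$, which I would cite from \cite{duchi11adaptive} (its proof uses operator concavity of $X\mapsto X^{1/2}$ together with the exact recursion $\Gcal_k\Gcal_k^\top=\Gcal_{k-1}\Gcal_{k-1}^\top+\g_k\g_k^\top$).

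Substituting this sum bound and optimizing the free step size $\eta$ over $\Rbb^+$ --- which yields $2\sqrt{\tfrac{D^2}{2}(\tr\widehat{\mathbf{H}}_K)^2}=\sqrt{2}\,D\,\tr(\widehat{\mathbf{H}}_K)$ --- the regret is at most $\sqrt{2}\,D\,\tr(\widehat{\mathbf{H}}_K)=\sqrt{2nt}\,D\,\tr(\C_K^{1/2}\otimes\B_K^{1/2})$. Finally, the Kronecker trace identity (Property~(P13)), $\tr(\C_K^{1/2}\otimes\B_K^{1/2})=\tr(\B_K^{1/2})\tr(\C_K^{1/2})$, gives $D\sqrt{2tn}\,\tr(\B_K^{1/2})\tr(\C_K^{1/2})$, matching the statement; the hypothesis $m\ge n$ is only used to identify the smaller dimension $n$, which is where the ``rank vs.\ dimension'' substitution noted before the Lemma enters (since $n$ upper-bounds the rank appearing in \cite{gupta18shampoo}'s original Theorem~7).

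The main obstacle I expect is not any single computation but the bookkeeping of which hypotheses are being imported: the clean OMD bound needs both halves of the domination picture --- domination of the gradient sum (stated in the Lemma) \emph{and} monotonicity $\widehat{\mathbf{H}}_k\succeq\widehat{\mathbf{H}}_{k-1}$ (not restated, so the proof must explicitly point to Equation~\eqref{eq:shampoo_dominate_prev} / Corollary~\ref{cor:KrAD_update_PD}) --- and one must be careful that the $\tfrac1{nt}$ factor, the $\sqrt{nt}$ rescaling of $\widehat{\mathbf{H}}_k$, and the trace factorization line up to produce exactly $\sqrt{2tn}$. Everything else is a direct transcription of the Shampoo regret argument from the $(\L^{-4},\R^{-4})$ parametrization into the $(\B,\C)$ one.
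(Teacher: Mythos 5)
The paper does not prove this lemma --- it is imported from Theorem~7 of \cite{gupta18shampoo} with only a notational change and the rank-to-dimension substitution --- and your reconstruction is correct and follows essentially the same argument as that reference: adaptive-OMD telescoping under preconditioner monotonicity, the Duchi-style trace lemma, operator monotonicity of the square root, and the Kronecker trace identity $\tr(\C_K^{1/2}\otimes\B_K^{1/2})=\tr(\B_K^{1/2})\tr(\C_K^{1/2})$, with the step size optimized. Your observation that the monotonicity $\C_k\otimes\B_k\succeq\C_{k-1}\otimes\B_{k-1}$ is an implicit hypothesis (supplied in this paper by Corollary~\ref{cor:KrAD_update_PD} when the lemma is applied) is accurate and worth making explicit.
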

        
        Now, we have our regret result: a proof is in Appendix~\ref{app:proof_krad_regret}.
        \begin{theorem}
        \label{thm:krad_regret}
            Assuming a 1-Lipschitz loss, the regret from using KrADagrad$^\star$ scales as $O(\sqrt{K})$.
        \end{theorem}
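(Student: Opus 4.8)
The plan is to reduce Theorem~\ref{thm:krad_regret} to Lemma~\ref{lem:gupta} by verifying its two hypotheses for the statistics produced by Algorithm~\ref{alg:KrADagrad_star} with constants that do not grow with $K$, and then reading off the $O(\sqrt{K})$ rate from the trace bounds in Lemma~\ref{lem:trace_bound}. Set $\B_k=\L_k^{-1}$ and $\C_k=\R_k^{-1}$. With $\alpha=1/2$ the update actually taken is $\W_k=\W_{k-1}-\eta\L_k^{1/4}\G_k\R_k^{1/4}$, which in vectorized form uses the preconditioner $\R_k^{1/4}\otimes\L_k^{1/4}=\C_c^{-1/2}\otimes\B_c^{-1/2}$ with combined statistics $\B_c=\B_k^{1/2}$, $\C_c=\C_k^{1/2}$, exactly the form treated by Lemma~\ref{lem:gupta}. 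So it suffices to establish (i) the domination $\C_c\otimes\B_c\succeq\frac{1}{nt}\Gcal_K\Gcal_K^\top$ for some $t$ independent of $K$, and (ii) $\tr(\B_c^{1/2})\tr(\C_c^{1/2})=\tr(\B_K^{1/4})\tr(\C_K^{1/4})=O(\sqrt{K})$.

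For (i), I would view Algorithm~\ref{alg:KrADagrad_star} as maintaining two single-factor KrAD estimators exactly as in Section~3.4.1, but applied to the preconditioner $\P$ rather than to $\Q$: one updates only $\L$ with the right factor pinned to $\I_n$, the other updates only $\R$ with the left factor pinned to $\I_m$. By induction on $k$, using Lemma~\ref{lem:krad} for the new rank-one gradient term, property (P15) to combine it with the inductive hypothesis $\I_n\otimes\B_{k-1}\succeq\frac{1}{nt}\Gcal_{k-1}\Gcal_{k-1}^\top$, and Corollary~\ref{cor:KrAD_update_PD} (which makes the Woodbury lower bound $\L_k\preceq\widetilde\L_k$ legitimate, hence $\B_k=\L_k^{-1}\succeq\widetilde\B_k$, so the domination survives the inversion), I get $\I_n\otimes\B_k\succeq\frac{1}{nt}\Gcal_k\Gcal_k^\top$ and symmetrically $\C_k\otimes\I_m\succeq\frac{1}{nt}\Gcal_k\Gcal_k^\top$ for every $k$. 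Merging the two uses the matrix geometric mean: operator monotonicity of $M_g$ together with the identity $M_g(\I_n\otimes\B_K,\ \C_K\otimes\I_m)=M_g(\I_n,\C_K)\otimes M_g(\B_K,\I_m)=\C_K^{1/2}\otimes\B_K^{1/2}$ and $M_g(Z,Z)=Z$ give $\C_c\otimes\B_c\succeq\frac{1}{nt}\Gcal_K\Gcal_K^\top$. For the uniform $t$: the initialization $\L_0=\I_m$, $\R_0=\I_n$ and Corollary~\ref{cor:KrAD_update_PD} give $\L_{k-1}\preceq\I_m$, $\R_{k-1}\preceq\I_n$ for all $k$; with the 1-Lipschitz assumption $\|\G_k\|_2\le1$, the per-step factors satisfy $t_{k,L}=1+\|\G_k\G_k^\top\L_{k-1}\|_F\le1+\sqrt{m}$ and $t_{k,R}\le1+\sqrt{n}$, so $t:=1+\max\{\sqrt{m},\sqrt{n}\}$ works and is independent of $K$.

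For (ii), I would apply Lemma~\ref{lem:trace_bound} with $\B_0=\I_m$, $\C_0=\I_n$, so $s=2$, $\|\L_0\|_2=\|\R_0\|_2=1$, $\tr(\B_0)=m$, $\tr(\C_0)=n$; if $\L$ has been updated $K-k'$ times by iteration $K$, then $\tr(\B_K)\le m+2k'm$ and $\tr(\C_K)\le n+2(K-k')n$. Since $x\mapsto x^{1/4}$ is concave, Hölder's inequality on the eigenvalues gives $\tr(A^{1/4})\le m^{3/4}(\tr A)^{1/4}$ for any $m\times m$ PSD $A$, so $\tr(\B_K^{1/4})=O(m\,k'^{1/4})$ and $\tr(\C_K^{1/4})=O(n\,(K-k')^{1/4})$, whence $\tr(\B_K^{1/4})\tr(\C_K^{1/4})=O\!\big(mn\,(k'(K-k'))^{1/4}\big)=O(mn\sqrt{K})$ because $k'(K-k')\le K^2/4$. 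Plugging into Lemma~\ref{lem:gupta}, $\mathrm{regret}\le D\sqrt{2tn}\,\tr(\B_K^{1/4})\tr(\C_K^{1/4})=O(\sqrt{K})$, treating $D=\max_k\|\w_k-\w_*\|_2$ and the dimensions $m,n$ (hence $t$) as problem-dependent constants.

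The main obstacle is step (i): one must check carefully that lower-bounding the Woodbury update by $\L_k$ (Corollary~\ref{cor:KrAD_update_PD}) keeps the Kronecker domination pointing the correct way after the inversion, that the per-step factors $t_{k,L},t_{k,R}$ genuinely admit a $K$-independent bound (this is where $\L_{k-1}\preceq\I$, $\R_{k-1}\preceq\I$ and 1-Lipschitzness are all used), and that the matrix geometric mean simultaneously commutes with $\otimes$ and is operator monotone so that the two single-factor dominations can be fused into the single hypothesis Lemma~\ref{lem:gupta} needs; one should also confirm that the step $\L_k^{1/4}\G_k\R_k^{1/4}$ really is the preconditioner $\C_c^{-1/2}\otimes\B_c^{-1/2}$ attached to these combined statistics. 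The trace estimates and the closing Hölder/AM–GM arithmetic are routine once (i) is in place.
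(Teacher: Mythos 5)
Your proposal is correct and follows essentially the same route as the paper's proof: reduce to Lemma~\ref{lem:gupta} by identifying the combined statistics $\B_c=\B_K^{1/2}$, $\C_c=\C_K^{1/2}$ (geometric mean of the two single-factor KrAD estimators), then bound $\tr(\B_K^{1/4})\tr(\C_K^{1/4})=O(\sqrt{K})$ via Lemma~\ref{lem:trace_bound} and the trace H\"older inequality. You are in fact more explicit than the paper, which leaves the domination hypothesis and the uniform bound on $t$ implicit in the Section~3.4 discussion rather than re-verifying them inside the proof.
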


    \section{Implementation}
    \label{sec:implementation}
Now we discuss the algorithmic considerations for actually implementing KrADagrad$^\star$. The main difficulty lies in efficiently computing the matrix roots. Differentiable matrix square roots for machine learning applications have been the subject of a substantial amount of research~\citep{song21approximate}. For preconditioning, we do not require differentiability for our roots, so they can be computed using numerical linear algebra methods (e.g. SVD) without concern for the backward pass of the root itself. While such algorithms are gaining hardware support on current GeMM accelerators and software frameworks, they are still not universal (Pytorch supports SVD and QR on CUDA in double precision via cuSOLVER and MAGMA \citep{pytorch21blog,cusolver23api,dongarra14accelerating}, but TPUs to our knowledge do not \citep{jouppi21ten}).

However, as~\citep{anil20scalable} discovered, accurate inverse powers require high precision to retain the important contributions of the eigenvectors corresponding to the smallest eigenvalues, and current general matrix multiplication (GeMM) accelerators do not prioritize 64-bit computation performance (i.e. GPUs see significant speed reductions, while TPUs do not support it at all). Plus, they assume the preconditioners are a slowly-varying sequence of matrices. Thus, they propose computing matrix roots iteratively on CPU due to these algorithmic and hardware architectural constraints.


In contrast, Kradagrad$^\star$ deals with positive roots, and both these iterative matrix methods and the numerical linear algebra techniques are otherwise actually amenable to computation using GeMM accelerators. Ultimately, as a straightforward solution for the roots that appear in KrADagrad$^\star$, the SVD on GPU suffices. We note that avoiding inversion is thus not about reducing the computational complexity, rather we aim to avoid needing 64-bit computation. Further details of matrix roots are discussed in Appendix~\ref{app:implementation_matrix_roots}. 

Additionally, diagonal damping is a common feature of preconditioned methods due to their contribution to numerical stability. While we do not need this for numerical stability, we find empirically that diagonal damping still helps reduce the effect of gradient noise and smooth out the loss curve. We describe how this can be applied to KrADagrad$^\star$ in further detail in Appendix~\ref{app:implementation_damping}.

So far, our proposed algorithms and analyses apply to matrix parameters. As noted earlier, we can extend this to tensor parameters in a fairly straightforward manner as mainly a matter of additional notation and bookkeeping. Hence, we relegate the extension to Appendix~\ref{app:implementation_tensors}.

The baseline Pytorch Shampoo implementation we use is not optimized, so is not totally fair to the true capability of the baseline. Similarly, our KrADagrad variants, being derived from the mentioned implementation of Shampoo, did not have optimized implementations.
Nonetheless, we have conducted some preliminary wall clock time comparisons between our implementation of KrADagrad and Shampoo, which we summarize in Table~\ref{tab:wct}. For each data set, we report the time in seconds to run a single epoch for (KrADagrad, KrADagrad$^\star$, Shampoo) along with a 2 standard deviation interval, computed from epochs 5 through 10 on an NVidia A40 GPU.

\setlength{\belowcaptionskip}{-3pt}
\begin{table}[ht]
    \caption{Comparison of wall clock times (in seconds) between our implementations of KrADagrad variants and Shampoo on a single epoch and a 2 standard deviation interval computed from 5 epochs for CIFAR-10/100 data sets on an NVidia A40 GPU.}
    \label{tab:wct}
    \vspace{-10pt}
    \begin{center}
    \begin{smaller}
    \begin{sc}
    \begin{tabular}{lcccr}
        \toprule
        Data set & KrADagrad & KrADagrad$^\star$ & Shampoo \\
        \midrule
        CIFAR-10    & 29.50$\pm$ 0.88& 45.76$\pm$ 0.20& 35.62 $\pm$ 0.17 \\
        CIFAR-100 & 54.71$\pm$ 0.36& 76.12$\pm$ 1.72& 66.45 $\pm$ 0.33 \\
        \bottomrule
    \end{tabular}
    \end{sc}
    \end{smaller}
    \end{center}
    \vspace{-16pt}
\end{table}

\subsection{Compute and memory costs}
    For Kradagrad, computing $\Delta\L_k$ has a computational cost of $(2N(2m+n)+2m^2+2m^3)$
    While we still require $O(m^3+n^3)$ matrix square roots for the KrADagrad update, this is less difficult numerically and thus computationally than the $-4$th root required by Shampoo.
    
    KrADagrad$^\star$ requires 4th root computation, which is comparable in cost to the inverse 4th root, but without the need for high precision.

    Storage involves tracking the two matrix factors, and so is $O(m^2+n^2)$ for all methods.
    
    We summarize these costs in Table~\ref{tab:complexity}.
    \begin{table}[ht]
        \caption{Comparison of complexity between our implementations of KrADagrad variants and Shampoo.}
        \label{tab:complexity}
        \vspace{-10pt}
        \begin{center}
        \begin{smaller}
        \begin{sc}
        \begin{tabular}{lcccr}
            \toprule
            &KrADagrad & KrADagrad$^\star$ & Shampoo \\
            \midrule
            Compute    & $O(m^3 + n^3)$ & $O(m^3 + n^3)$ & $O(m^3 + n^3)$ \\
            Memory &  $O(m^2 + n^2)$ & $O(m^2 + n^2)$ & $O(m^2 + n^2)$ \\
            \bottomrule
        \end{tabular}
        \end{sc}
        \end{smaller}
        \end{center}
        \vspace{-16pt}
    \end{table}

    \section{Experiments}
    \label{sec:experiments}
    We address the following: 1) How sensitive is our method to matrix conditioning compared to Shampoo? 2) How does convergence speed compare in the number of training steps? (as measured by task-specific validation metrics) 3) How do our methods compare in the achieved model quality at or near convergence? The goal of each is to compare optimizers in various challenging loss landscapes, rather than to achieve state of the art performance.

To answer 1, in a synthetic experiment we minimize a multidimensional quadratic function with a non-diagonal, poorly-conditioned Hessian that neatly factorizes into a Kronecker product of two individually poorly-conditioned PD matrices.

To answer 2 and 3, we compare KrADagrad and KrADagrad$^\star$ to alternatives across a variety of tasks: image classification (IC), autoencoder problems (AE), recommendation (RecSys), continual learning (CL). For IC experiments we train ResNet-32/56~\citep{he16deep} without BatchNorm (BN) on CIFAR-10/100~\citep{krizhevsky09learning}. For AE, we train simple autoencoders consistent with ~\citep{goldfarb20practical} on MNIST~\citep{mnist}, as well as CURVE and FACES~\citep{curves_faces}. For recommendation, we train H+Vamp Gated~\citep{Kim_2019} on MovieLens20M~\citep{movielens20m}. In the continual learning setting we train on two benchmarks from \citep{lomonaco2021avalanche}: GEM~\citep{GEM} on Permuted MNIST~\citep{permuted_mnist} and LaMAML~\citep{lamaml} on Split CIFAR100~\citep{split_cifar100}.

We choose Shampoo as a baseline 2nd order optimizer as: a.) KrADagrad variants are most similar to Shampoo b.) we expect at best similar performance unless KrADagrad's approximations and lower precision are in practice detrimental. To ground all comparisons, we always include SGD, Adam, plus any unique optimizer from an existing benchmark.

In all our experiments, we initialized from common seeds across optimizers. However, the first points on the training curves follows the number of training steps per evaluation interval, so they do not visually appear to start from the same point. Doing so would have required modifying the codebases separately for each experiment. In addition, ideally we would have had the resources to run multiple shared seeds for the selected HPs of every task we explored. For smaller tasks where convergence is achieved quickly, such as the autoencoder experiments, it was feasible for us to do this and we share this in Figures \ref{fig:ae} and \ref{fig:ae_seeds}. In the tradeoff of how to use our compute and time resources, we opted to present fair evaluations of the optimizers (i.e. ensuring shared seeds and optimized HPs per optimizer per task) across a diversity of tasks, and repeatability statistics from different initial guesses as much as possible.

\setlength{\parskip}{.25\baselineskip}
\subsection{Conditioning Experiment}
To see the effects of Hessian conditioning on optimizers of interest, we create a synthetic deterministic convex problem. For $\X\in\Rbb^{128\times 128}$, we minimize a quadratic loss function\par
{\centering
    $\min_{\X}\, \tr(\X^\top\A\X\B)$
\par}
where $\A,\B \succ 0$ are non-diagonal and have condition numbers $\kappa(\A)=\kappa(\B)=10^{10}$. We seed each optimizer with the same starting point and sweep learning rates and pick the one with the lowest loss for each optimizer. We provide further details in Appendix \ref{app:addl_exp}.
While this stationary problem without a validation or test dataset is different from the online setting assumed in Theorem \ref{thm:krad_regret}, and even from a typical offline machine learning problem, it helps isolate differences in behavior between optimizers in a badly conditioned convex loss landscape.

\setlength{\belowcaptionskip}{-8pt}
\begin{figure}[t]
\includegraphics[width=0.5\textwidth]{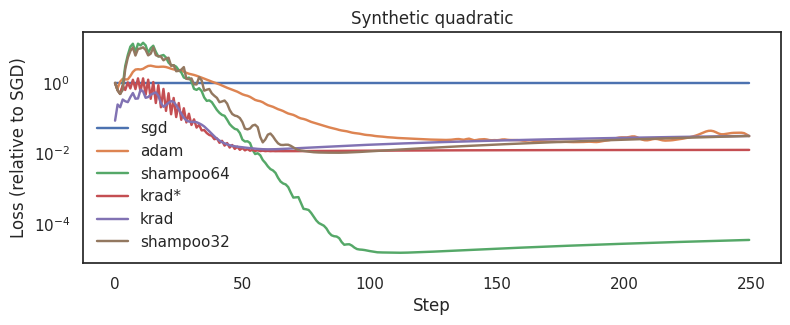}
\caption{Loss on synthetic quadratic in log scale, relative to SGD. Each curve is \textit{divided} by that of SGD.}
\label{fig:quad}
\end{figure}

In Figure \ref{fig:quad}, while Shampoo in double precision outperforms the others in terms of final loss achieved, single precision KrADagrad$^\star$ outperforms single precision Shampoo. Adam does not converge quite as quickly or to as good a loss, as the adaptivity it provides is aligned to the canonical axis, while $\A, \B$ being non-diagonal act in a rotated basis.

\subsection{Real Datasets}

For each task and optimizer we sweep hyperparameters (HP), select those yielding the best validation metric at some epoch or iteration, and display the corresponding learning curves in Figures \ref{fig:resnet} through \ref{fig:hvamp}.  We set the preconditioner update rate for Shampoo and KrADagrad$^\star$ to every 20 training steps. For all experiments, Shampoo uses double precision for negative matrix roots, and SGD includes momentum, unless stated otherwise. We summarize our observations here and provide additional detail in Appendix \ref{app:experiments}.

\begin{figure}[t]
\includegraphics[width=0.5\textwidth]{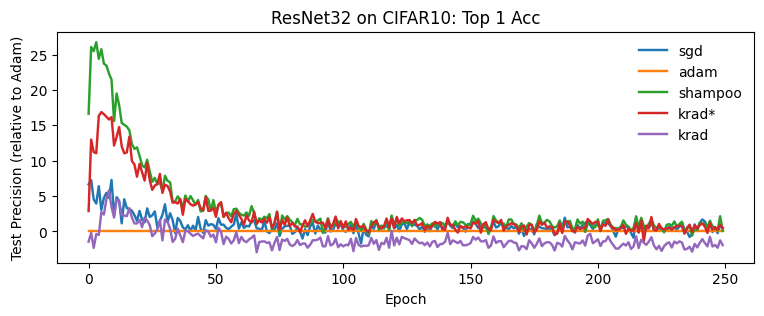}
\includegraphics[width=0.5\textwidth]{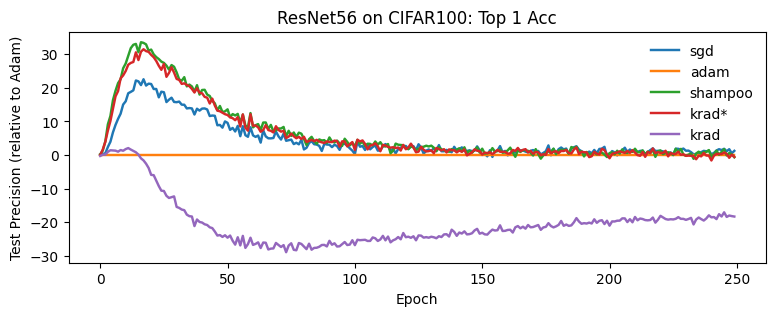}
\caption{Vision experiments. Each curve is initialized from the same single seed and \textit{subtracted} from that of Adam. Top: Top 1 Accuracy (in \%) of ResNet-32 (without BN) on CIFAR-10, relative to Adam; Bottom: Top 1 Accuracy (in \%) of ResNet-56 (without BN) on CIFAR-100, relative to Adam.}
\label{fig:resnet}
\end{figure}
\setlength{\belowcaptionskip}{-6pt}

For the three autoencoder experiments, after HP tuning we retrain the best HPs with 5 unique seeds shared across optimizers, with results in Figures \ref{fig:ae} and \ref{fig:ae_seeds}.  We also find the auto encoder tasks to be relevant baselines for evaluating Shampoo alternatives because Shampoo consistently outperforms SGD and Adam. While KrAdagrad$^\star$ also outperforms SGD and Adam, Shampoo reaches better solutions or similar solutions in fewer steps. Following the synthetic experiments, we test the hypothesis that KrAdagrad$^\star$ might perform better than 32-bit Shampoo by running our best HPs for Shampoo with single precision. The 32- and 64-bit versions perform similarly on these tasks, falsifying the hypothesis in the general sense. KrAdagrad, slower in the number of steps and performing worse in general, eventually reaches similar performance to KrAdagrad$^\star$ on CURVES, with some runs reaching that of 32 bit Shampoo (see Figure \ref{fig:ae_seeds}b). One hypothesis is that on these particular tasks, effectively traversing the loss landscape requires eigenvectors that the KrADagrad approximation has more difficulty capturing than Shampoo.

\begin{figure*}[!h]
    \centering
    \includegraphics[width=0.33\textwidth]{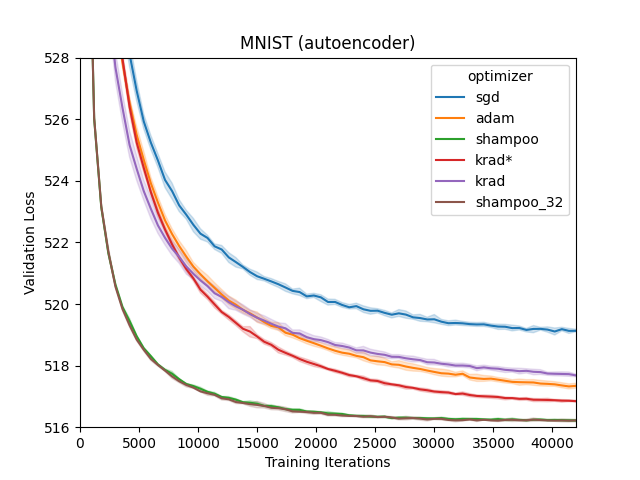}
    \includegraphics[width=0.33\textwidth]{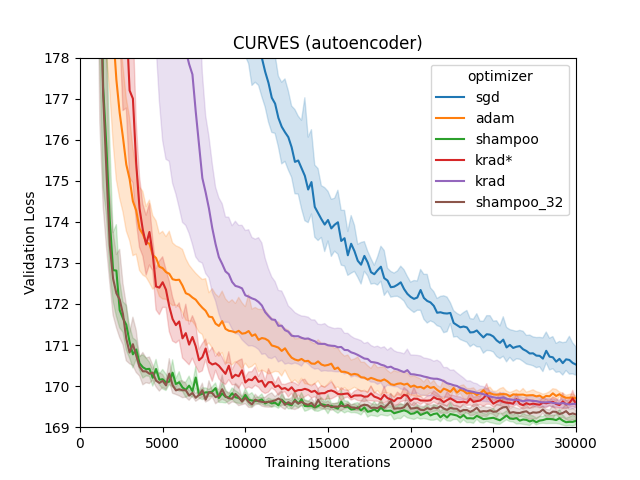}
    \includegraphics[width=0.33\textwidth]{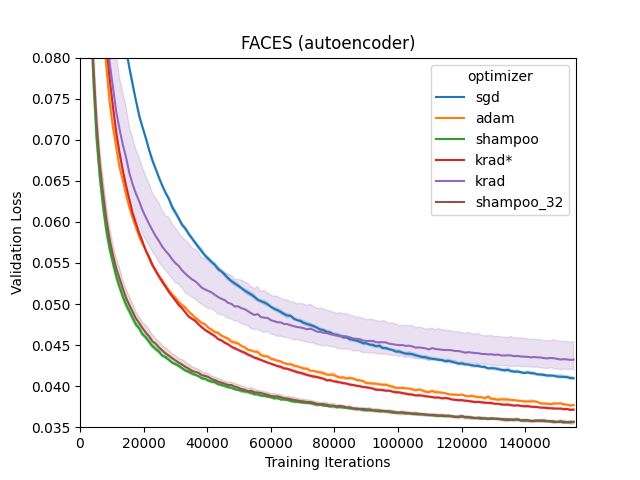}
    \caption{Reconstruction validation error for fully connected auto encoder a) mean cross entropy on MNIST b) mean cross entropy on CURVES c) mean squared error on FACES.  The learning curves shown are averaged across 5 unique seeds with CI95 error bars.  We do this analysis for the autoencoder experiments since they're relatively inexpensive to run.}
    \label{fig:ae}
\end{figure*}




\begin{figure}[t]
    \vspace{-6pt}
    \centering
    \includegraphics[width=0.45\textwidth]{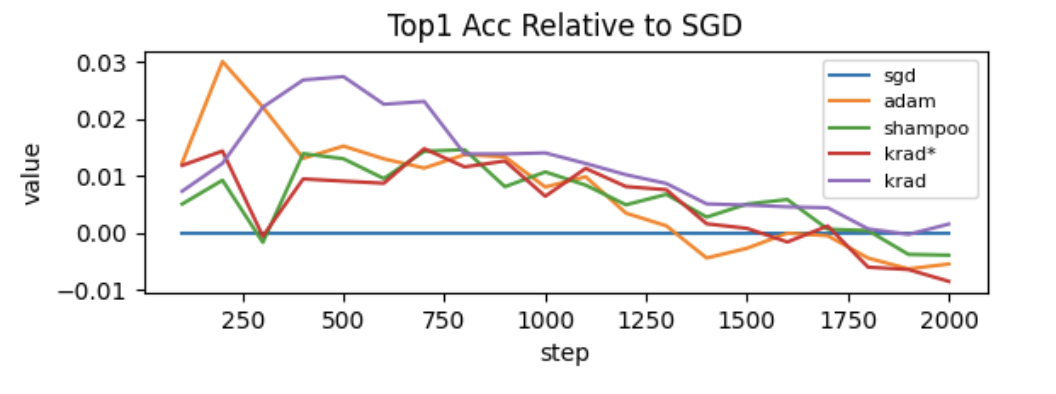}
    \includegraphics[width=0.45\textwidth]{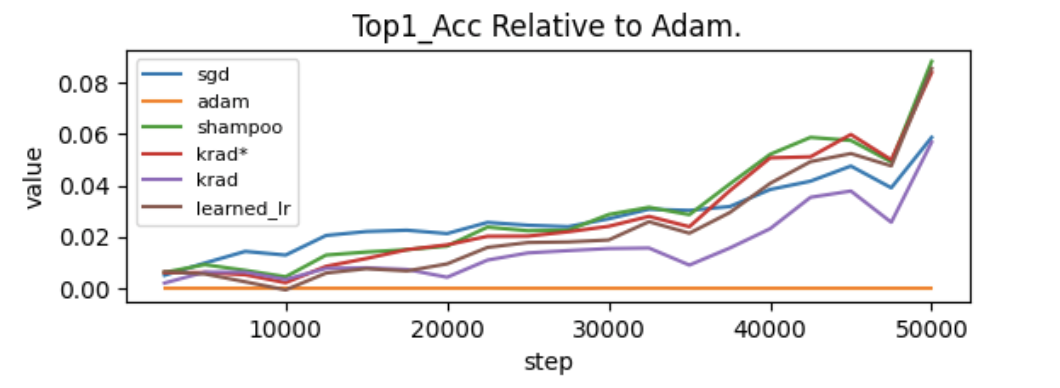}
    \caption{a) Top 1 Accuracy of GEM on Permuted MNIST, relative to SGD. Each curve averages multiple seeds, subtracted from the corresponding SGD curve to illuminate otherwise unobservable differences. See Appendix \ref{app:addl_exp} for absolute curves. b) Top 1 Accuracy of LaMAML on Split CIFAR-100, relative to Adam. }\label{fig:pmnist-lamaml}
    \vspace{-6pt}
\end{figure}

While \cite{gupta18shampoo} performs experiments on ``standard'' machine learning tasks, their regret results pertain to an online setting. We include continual learning problems to test the limits of the theoretical setting. Figure \ref{fig:pmnist-lamaml}a displays learning curves for GEM on Permuted MNIST relative to SGD to illuminate small differences (we include a plot of the actual accuracies in Appendix \ref{app:addl_exp}). On average, KrADagrad stays slightly ahead of others throughout training, unlike KrADagrad$^\star$.


Figure \ref{fig:pmnist-lamaml}b shows similar accuracy curves for LaMAML on Split CIFAR100 relative to Adam (again, the absolute accuracies are in Appendix \ref{app:addl_exp}). KraADagrad$^\star$ closely follows Shampoo and the adaptive learning rate optimizer from \citep{lamaml}, while KrADagrad performs only better than Adam.


We also evaluate Shampoo and KrAdagrad variants against the baseline for H+Vamp Gated\footnote{https://github.com/psywaves/EVCF}. Shampoo reaches a lower training loss compared to KrAdagrad$^\star$, while both Adam variants converge faster and to lower loss values (Figure \ref{fig:hvamp}).

\begin{figure}[t]
\includegraphics[width=0.45\textwidth]{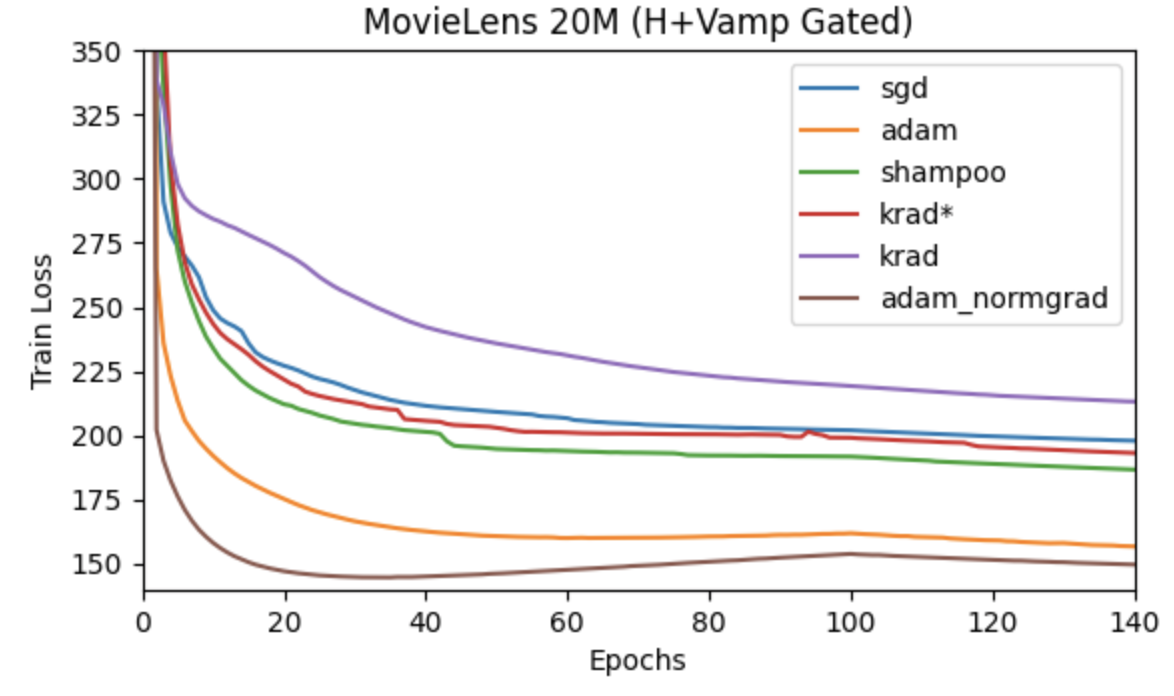}
\caption{Training loss of H+Vamp Gated on ML-20M. We choose to compare the training loss for this experiment because it comprises a weighted combination of components, RE + $\beta\cdot $ KL, where $\beta$ changes according to a predefined schedule (causing loss growth in the 1st 100 epochs), which is not consistent in the validation loss.}
\label{fig:hvamp}
\end{figure}


    \section{Conclusion}
    \label{sec:conclusion}
    \setlength{\parskip}{.5\baselineskip}
We introduced KrADagrad$^\star$, a preconditioned gradient optimizer that avoids matrix inversion, and proved that it has optimal regret properties. We showed that its performance exceeds that of Shampoo in single precision on an ill-conditioned synthetic convex optimization problem. Our experiments on real datasets show that KrADagrad$^\star$ often performs similarly to Shampoo.

Future work includes tractable methods to compute low-rank updates to matrix roots, for instance with rational Krylov subspace methods, to improve the performance of KrADagrad$^\star$ and other Kronecker-factored preconditioners. It is also worth considering the application of the optimizer to other application areas, such as decision making and controls, or scientific discovery.


    \begin{acknowledgements}
        We thank Luminous Computing for funding this work. In particular, we thank David Baker for encouraging this line of investigation and David Scott for fruitful discussions about numerical linear algebra.
    \end{acknowledgements}
\clearpage
\bibliography{main.bib}

\title{KrADagrad: Kronecker Approximation-Domination Gradient\\Preconditioned Stochastic Optimization\\(Supplementary Material)}
\onecolumn
\maketitle

\appendix

    \startcontents[appendices]
    \printcontents[appendices]{l}{1}{\setcounter{tocdepth}{2}}
    
    \section{KrADagrad}
    \label{app:krad}
    \subsection{KrADagrad: direct Adagrad-style alternating updates}
\label{subsec:KrADagrad_adagrad}
    We propose an alternative scheme for updating the statistics. We alternate between directly using KrAD to update left and right statistics in different iterations. This has the practical advantage of only requiring one matrix  statistic update per iteration and only needing a matrix square root (even in the extension to tensors!). However, the scheme only attains optimal regret when we stop updating the right preconditioner after a fixed number of iterations, though we note that we describe how to choose an update schedule to obtain regret arbitrarily close to optimal.
    
    Arguably, the most intuitive way to update $(\L_k,\R_k)$ is to select an index set $\Ical\subset \mathbb{N}$ as a subset of the natural numbers, so that if $k\in\Ical$ we update $\L_k$ and hold $\R_k$ constant, and vice versa for $k\not\in\Ical$. 
    
    It is straightforward to see that we simply need to compute a matrix square root on KrAD statistics to apply Adagrad-style preconditioning,
    \begin{align}
    \label{eq:KrADagrad_Adagrad_update}
        \W_{k+1} &= \W_k - \eta_k \L_k^{1/2}\G_k \R_k^{1/2}.
    \end{align}
    In an iteration $k\in\Ical$, since we have held $\R_k$ constant, we can store its square root from the previous iteration and only need to compute a square root for $\L_k$ that we updated in this iteration. In this case, we refer to our intermediate matrices $(\L_k,\R_k)$ as our statistics and the pair $(\L_k^{1/2}, \R_k^{1/2})$ as the actual preconditioners. Thus, we can see that our storage requirement is now $(\L_k,\L_k^{1/2}, \R_k, \R_k^{1/2})$.

    We call this update scheme KrADagrad and summarize in Appendix~\ref{app:implementation_kradagrad}.
    
    \subsubsection{Regret}
        We establish some results for these updates to provide the groundwork for the theoretical properties of the algorithm derived from the updates. 
        \begin{lemma}
        \label{lem:Q_dom_GGT}
            The corresponding $\Q_k\succeq \frac{1}{nt}\Gcal_k\Gcal_k^\top$, where
            $t=\max_k t_k$.
        \end{lemma}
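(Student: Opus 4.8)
The plan is to prove the claim by induction on $k$, reusing almost verbatim the domination chain already assembled in the derivation of the KrADagrad update and supplying two extra ingredients: the inversion inequality $\widetilde{\B}_k \preceq \B_k$, and linearity/order\nobreakdash-monotonicity of the Kronecker product in each PSD argument. Throughout, $\Q_k$ denotes the (never materialized) statistic $\C_k \otimes \B_k = \R_k^{-1} \otimes \L_k^{-1}$ implicitly tracked by the iterates $(\L_k,\R_k)$, and we write $t = \max_k t_k$, so $t_k \le t$ for every $k$. For the base case, $\Q_0 = \R_0^{-1}\otimes\L_0^{-1}$ dominates $\Gcal_0\Gcal_0^\top = \epsilon^2\I_{mn}$ under the algorithm's identity initialization and a standard small damping $\epsilon$, since then $\Q_0 = \I_{mn}$ and $t \ge 1$.

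For the inductive step assume $\Q_{k-1} \succeq \tfrac{1}{nt}\Gcal_{k-1}\Gcal_{k-1}^\top$ and first suppose iteration $k$ updates the left factor. Applying Lemma~\ref{lem:krad} with the single column $\Ucal = \tfrac{1}{\sqrt{t_k}}\g_k$ and fixed right factor $\C_k = \C_{k-1}$ gives $\C_k \otimes \Delta\B_k \succeq \tfrac{1}{nt_k}\g_k\g_k^\top$ for $\Delta\B_k = \tfrac{1}{t_k}\G_k\C_k^{-1}\G_k^\top$; adding the inductive hypothesis, using linearity of $\otimes$ in the updated left argument, and $t_k \le t$ yields $\C_k \otimes \widetilde{\B}_k = \Q_{k-1} + \C_k \otimes \Delta\B_k \succeq \tfrac{1}{nt}(\Gcal_{k-1}\Gcal_{k-1}^\top + \g_k\g_k^\top) = \tfrac{1}{nt}\Gcal_k\Gcal_k^\top$, which is exactly Equation~\eqref{eqn:b-tilde-dominating-scaled-gradients}. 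To replace $\widetilde{\B}_k$ by $\B_k$, note $\widetilde{\B}_k = \B_{k-1} + \Delta\B_k \succ 0$, hence $\widetilde{\L}_k = \widetilde{\B}_k^{-1} \succ 0$; combined with $\L_k \succ 0$ (Corollary~\ref{cor:KrAD_update_PD}) and $\L_k \preceq \widetilde{\L}_k$ (Equation~\eqref{eq:krad_update_dominate_inverse}), inverting reverses the order to give $\B_k = \L_k^{-1} \succeq \widetilde{\L}_k^{-1} = \widetilde{\B}_k$. Since the Kronecker product is monotone in the left argument with $\C_k \succeq 0$ held fixed, $\Q_k = \C_k \otimes \B_k \succeq \C_k \otimes \widetilde{\B}_k \succeq \tfrac{1}{nt}\Gcal_k\Gcal_k^\top$.

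The case where iteration $k$ updates the right factor is entirely symmetric. Conjugating Lemma~\ref{lem:krad} by the commutation matrix $\mathbf{K}$ (which exchanges $\B \otimes \C$ with $\C \otimes \B$ and maps $\vec(\X)$ to $\vec(\X^\top)$) gives the transposed statement: with fixed left factor $\B_k = \B_{k-1}$ and $\Ucal = \tfrac{1}{\sqrt{t_k}}\g_k$ one obtains $\widehat{\C}_k \otimes \B_k \succeq \tfrac{1}{m t_k}\g_k\g_k^\top$, where $\widehat{\C}_k - \C_{k-1} = \tfrac{1}{t_k}\G_k^\top\B_k^{-1}\G_k = \tfrac{1}{t_k}\G_k^\top\L_{k-1}\G_k$. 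Because $m \le n$ we have $\tfrac{1}{mt_k} \ge \tfrac{1}{nt}$, so adding the inductive hypothesis and repeating the inversion and monotonicity arguments with the roles of the two factors swapped again delivers $\Q_k \succeq \tfrac{1}{nt}\Gcal_k\Gcal_k^\top$, completing the induction.

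The only content beyond the existing derivation is the inversion step $\widetilde{\B}_k \preceq \B_k$ and the transposed form of Lemma~\ref{lem:krad}. I expect the former to be the crux: the implication $0 \prec A \preceq B \Rightarrow B^{-1} \preceq A^{-1}$ genuinely requires positive-definiteness of both operands, which is precisely what Corollary~\ref{cor:KrAD_update_PD} guarantees for $\L_k$ (while $\widetilde{\L}_k$ is PD automatically). Tracking the $\tfrac{1}{m}$ versus $\tfrac{1}{n}$ constant in the right-update case is the other place requiring care, but it is routine given $m \le n$.
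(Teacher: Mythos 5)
Your proof is correct and follows essentially the same route as the paper's: the paper's two-line argument likewise combines the chain $\widetilde\L_k\succeq\L_k\succ 0$ (hence $\B_k=\L_k^{-1}\succeq\widetilde\B_k$ by order reversal under inversion) with the domination in Equation~\eqref{eqn:b-tilde-dominating-scaled-gradients}, which itself encodes the induction you spell out. You merely make explicit what the paper leaves implicit — the base case, the accumulation over iterations, and the symmetric right-update case with its $1/m$ versus $1/n$ constant — all of which check out.
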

        \begin{proof}
            Taking $\Delta\L_k = \frac{1}{t_k}\L_{k\!-\!1}\G_k\R_{k}\G_k^{\top}\L_{k\!-\!1}$ and $\L_k=\L_{k-1}-\Delta\L_k$, we have from the chain of inequalities preceding Equation~\eqref{eq:krad_update_dominate_inverse},
            \begin{align*}
                &\L_{k}^{-1}=(\L_{k-1}-\Delta\L_k)^{-1} \succeq \B_{k-1}+\Delta\B_{k} \\
                \Rightarrow& \Q_k = \R_{k}^{-1} \otimes \L_{k}^{-1} \succeq \C_{k} \otimes \widetilde\B_{k} \succeq \frac{1}{nt}\Gcal_k\Gcal_k^\top
            \end{align*}
            where $\L_{k}$ is invertible due to it being PD as shown in Corollary~\ref{cor:KrAD_update_PD}, and the last domination inequality is due to Equation~\eqref{eqn:b-tilde-dominating-scaled-gradients}. We can see that $t=\max_{k} t_k$ suffices.
        \end{proof}
        Lemma~\ref{lem:Q_dom_GGT} is half of what we need to show good regret guarantees, as per proof of Theorem 7 in~\cite{gupta18shampoo}.
        The other half is the Lemma \ref{lem:trace_bound} from the main text.
        
        Thus, we have the next result, which plugs in the trace bounds of Lemma~\ref{lem:trace_bound} into Lemma~\ref{lem:gupta}.
        \begin{theorem}
            Over $K$ iterations, assume we update $\R_k$ in $K'=o(K)$ iterations and $\L_k$ in the remaining $K-K'$ iterations. The regret from using KrADagrad with Adagrad-style $\alpha=1/2$ exponent scales as\par
            {\centering $R(K)=O(\sqrt{KK'})$ \par}
        \end{theorem}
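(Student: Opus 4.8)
The plan is to instantiate the generic Kronecker-factored regret bound of Lemma~\ref{lem:gupta} with the two facts already assembled for this update scheme: the domination property (Lemma~\ref{lem:Q_dom_GGT}) and the trace growth bounds (Lemma~\ref{lem:trace_bound}), exactly as in the proof of Theorem~\ref{thm:krad_regret} for KrADagrad$^\star$. The observation that makes the trace lemma apply verbatim is that the alternating scheme updates exactly one of $\L_k,\R_k$ per step, so by iteration $K$ the left factor has been updated in $K-K'$ steps and the right factor in the remaining $K'$ steps; we therefore take $k=K$, $k'=K'$ in Lemma~\ref{lem:trace_bound}.

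First I would record the ingredients. The KrADagrad update \eqref{eq:KrADagrad_Adagrad_update} applies the preconditioner $\R_K^{1/2}\otimes\L_K^{1/2}=\C_K^{-1/2}\otimes\B_K^{-1/2}$, with $\B_K=\L_K^{-1}$ and $\C_K=\R_K^{-1}$ both PD by Corollary~\ref{cor:KrAD_update_PD}; this is precisely the form Lemma~\ref{lem:gupta} needs, and Lemma~\ref{lem:Q_dom_GGT} supplies the matching domination $\C_K\otimes\B_K=\Q_K\succeq\frac{1}{nt}\Gcal_K\Gcal_K^\top$ with $t=\max_k t_k$. I would then check that $t$ does not grow with $K$: since the loss is $1$-Lipschitz, $\|\G_k\|_2\le 1$, and since Corollary~\ref{cor:KrAD_update_PD} makes the factors non-increasing ($\L_{k-1}\preceq\L_0$, $\R_{k-1}\preceq\R_0$), we get $t_k=1+\|\L_{k-1}\G_k\R_{k-1}\G_k^\top\|_2\le 1+\|\L_0\|_2\|\R_0\|_2=s$, hence $t\le s$. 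Finally, Lemma~\ref{lem:trace_bound} with $k=K$, $k'=K'$ yields $\tr(\B_K)\le\tr(\B_0)+K'ms\|\R_0\|_2=O(K')$ and $\tr(\C_K)\le\tr(\C_0)+(K-K')ns\|\L_0\|_2=O(K)$, with constants depending only on the fixed dimensions and initialization.

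Next I would convert these into the square-root traces that appear in Lemma~\ref{lem:gupta}, using concavity of $\sqrt{\cdot}$ (equivalently Cauchy--Schwarz on the eigenvalues): $\tr(\B_K^{1/2})\le\sqrt{m\,\tr(\B_K)}=O(\sqrt{K'})$ and $\tr(\C_K^{1/2})\le\sqrt{n\,\tr(\C_K)}=O(\sqrt{K})$. Plugging in, with $D=\max_{1\le k\le K}\|\w_k-\w_{*}\|_2$ and $t\le s$,
\[
R(K)=\sum_{k=1}^{K}\big(f_k(\w_k)-f_k(\w_{*})\big)\;\le\; D\sqrt{2tn}\,\tr(\B_K^{1/2})\,\tr(\C_K^{1/2})\;=\;O\!\big(\sqrt{K'}\cdot\sqrt{K}\big)\;=\;O(\sqrt{KK'}),
\]
which is the claim; as a sanity check, when $\R_k$ is updated only finitely often ($K'=O(1)$) this collapses to the optimal $O(\sqrt{K})$ rate, consistent with the accompanying remark that stopping right-factor updates recovers optimality.

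The argument is essentially bookkeeping on top of lemmas already established, so I do not expect a genuinely hard step; the point that needs care is that the bound is only meaningful if $t$ and $D$ remain bounded as $K\to\infty$. Boundedness of $t$ I would handle as above via monotonicity of the factors; boundedness of $D$ is the standard bounded-domain assumption from \cite{gupta18shampoo} (enforceable by projecting $\w_k$ onto a bounded convex feasible set), which I would state explicitly. I would also note that $m,n$ are held fixed and absorbed into $O(\cdot)$; keeping them explicit gives $R(K)=O\!\big(mn\sqrt{KK'}\big)$ up to the $D$ and $s$ factors.
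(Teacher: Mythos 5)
Your proposal is correct and follows essentially the same route as the paper's proof: apply Lemma~\ref{lem:gupta} with the domination from Lemma~\ref{lem:Q_dom_GGT}, bound $\tr(\B_K^{1/2})\le\sqrt{m\,\tr(\B_K)}$ and $\tr(\C_K^{1/2})\le\sqrt{n\,\tr(\C_K)}$ (your Cauchy--Schwarz step is the paper's trace H\"{o}lder inequality, Property~(P6)), plug in the trace bounds of Lemma~\ref{lem:trace_bound} with $k=K$, $k'=K'$, and control $t\le s$ via monotonicity of the factors. Your added remarks on boundedness of $D$ and the $K'=O(1)$ sanity check are consistent with the paper's surrounding discussion.
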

        \begin{proof}
            With the modified form of the H\"{o}lder inequality for PSD matrices from~\pref{property:trace_holder}, 
            \begin{align*}
                \tr(\A^{1/2}\B^{1/2})& \le \tr(\A)^{1/2}\tr(\B)^{1/2} \\
                \Rightarrow\tr(\B_{k}^{1/2})& \le m^{1/2}\tr(\B_{k})^{1/2}.
            \end{align*}
            Then, from Equation~\eqref{eq:trace1},
            \begin{align*}
                \tr(\B_{k}^{1/2}) &\le m^{1/2}\tr(\B_{k})^{1/2}\\
                &\le \sqrt{m\tr(\B_{0}) \!+\! k'm^2 s\|\R_0\|_2}.
            \end{align*}
            Analogously, from Equation~\eqref{eq:trace2}
            \begin{align*}
                \tr(\C_{k}^{1/2}) &\le n^{1/2}\tr(\C_{k})^{1/2}\\
                &\le \sqrt{n\tr(\C_{0}) \!+\! (k \!-\! k')n^2 s\|\L_0\|_2}.
            \end{align*}
            Assuming w.l.o.g. that $k'=o(k)$, the regret bounded by $\tr(\B_{k}^{1/2})\tr(\C_{k}^{1/2})$ is further bounded by
            \begin{align*}
                \sqrt{(am+m^2 b(k \!-\! k'))(cn \!+\! n^2 dk')}=O(\sqrt{kk'}),
            \end{align*}
            where $a,b,c,d$ are constants w.r.t. $m,n,k,k'$ that can be expressed in terms of $\tr(\B_0)$, $\tr(\C_0)$, $\|\L_0\|_2$, and $\|\R_0\|_2$.
            Finally, $t_k = 1+\|\L_{k-1}\G_k\R_{k-1}\G_k^\top\|_2\le s$ for all $k>0$, so $t = \max_k t_k \le s$.
        \end{proof}
    We see that this is greater than the optimal rate of $O(\sqrt{K})$ unless $K'=O(1)$. Taking an update scheme in which $K'\sim K^{2\varepsilon}$, we can achieve a rate of $O(K^{1/2+\varepsilon})$, which can be made arbitrarily close to the optimal. In one sense, keeping one Kronecker factor as identity (as Shampoo does) can be seen as optimal according to our regret result, as it does not introduce the additional factor of $k'$ in the bound. This inspires an alternative way of updating $\R_k$, resulting in KrADagrad$^\star$ from the main text.
    
    \section{Deferred proofs}
    \label{app:proofs}
    Here we provide proofs that were either too long for the main text or pertained to less consequential results.

\subsection{Proof of Lemma~\ref{lem:krad}}
\label{app:proof_krad}
\krad*
\begin{proof}
        We largely follow~\cite{gupta18shampoo}. We consider the case where $r=1$ and drop the subscript on $\u_i$ for clarity (i.e. $\U=\U_1$) since summation holds across the inequality.
        Let the singular value decomposition (SVD) of $\R=\V\boldsymbol{\Lambda}\V^\top$, and $\v_j$ be the $j$-th column of $\V$ and $\lambda_j=\boldsymbol{\Lambda}_{j,j}$ the corresponding singular value. Let $\Z=\U\V$. Then
        \begin{align*}
            \U&=\Z\V^\top\\
            \L&=\U\R^{-1}\U^{\top}=\Z\boldsymbol{\Lambda}^{-1}\Z^\top.
        \end{align*}
        Note that $\Z$ does not necessarily have any special structure, since it was not chosen according to any factorization of $\U$. Let $\z_j$ be the $j$-th column of $\Z$. Then 
        \begin{align*}
            \U &= \sum_{i=1}^{n}\z_i\v_i^\top\\
            \Rightarrow \u &= \sum_{i=1}^{n}\v_i \otimes \z_i. &\because\pref{property:kron_vec}
        \end{align*}
        Next, we note that for a set of vectors $\{\y_i\}$ and any arbitrary vector $\x$, using equivalence of norms
        \begin{align*}
            \x^\top \left(\sum_{i=1}^{n}\y_i\right) \left(\sum_{i=1}^{n}\y_i^\top\right) \x &= \left(\sum_{i=1}^{n}\y_i^\top\x\right)^2\\
            &\le \left(\sum_{i=1}^{n}|\y_i^\top \x| \right)^2\\
            &\le n\sum_{i=1}^{n}\left(\y_i^\top \x\right)^2\\
            &= n\x^\top\left(\sum_{i=1}^{n}\y_i\y_i^\top \right)\x\\
            \Rightarrow \left(\sum_{i=1}^{n}\y_i\right) \left(\sum_{i=1}^{n}\y_i^\top\right) &\preceq n\sum_{i=1}^{n}\y_i\y_i^\top.
        \end{align*}
        Then consider
        \begin{align}
            \u\u^\top &= \left(\sum_{i=1}^{n}\v_i \otimes \z_i\right) \left(\sum_{i=1}^{n}\v_i \otimes \z_i\right)^\top \\
                &\preceq n\sum_{i=1}^{n}(\v_i \otimes \z_i)(\v_i \otimes \z_i)^\top \\
                &=n\sum_{i=1}^{n}(\v_i\v_i^\top \otimes \z_i \z_i^\top) & \because\pref{property:kron_prod}\\
                &=n\sum_{i=1}^{n}(\lambda_i\v_i\v_i^\top \otimes \lambda_i^{-1}\z_i \z_i^\top)\\
                &\preceq n\sum_{i=1}^{n}(\R \otimes \lambda_i^{-1}\z_i \z_i^\top)\label{eqn:R-dominates}\\
                &=n\R \otimes (\Z \boldsymbol{\Lambda}^{-1} \Z^\top)\\
                &=n\R \otimes \L.
        \end{align}
        Here Equation~\ref{eqn:R-dominates} holds since for any $i$,
        \begin{align*}
            \R - \lambda_i\v_i\v_i^\top &= \sum_{j\ne i} \lambda_j  \v_j \v_j^\top \succeq 0.
        \end{align*}
    \end{proof}

\subsection{Proof of Proposition~\ref{prop:middle_term}}
\label{app:proof_middle_term}
    First, we prove our intermediate result~\ref{prop:middle_term}, which we repeat here for convenience.
    \propmiddleterm*
    \begin{proof}
        Since $\L_{k-1}^{1/2}\succ 0$, it is invertible. Thus the matrices $t_k \M_k=\L_{k-1}^{1/2}\G_{k}\R_{k-1}\G_{k}^\top \L_{k-1}^{1/2} $ and $\L_{k-1}\G_k\R_{k-1}\G_{k}^\top$ are similar as
        \begin{align*}
            \L_{k-1}^{1/2} t_k \M_k\L_{k-1}^{-1/2}=\L_{k-1}\G_k\R_{k-1}\G_{k}^\top.
        \end{align*}
        This implies that these matrices have the same matrix norms
        \[
            \|t_k \M_k\|_2 = \|\L_{k-1}\G_k\R_{k-1}\G_{k}^\top\|_2.
        \]
        Next,
        \begin{align*}
            t_k &\ge 1 + \|\L_{k-1}\G_k\R_{k-1}\G_{k}^\top\|_2.
        \end{align*}
        Finally,
        \begin{align*}
            \left\|\M_k\right\|_2 &= \frac{1}{t_k}\|\L_{k-1}\G_k\R_{k-1}\G_{k}^\top\|_2 < 1.
        \end{align*}
    \end{proof}
    
    \subsubsection{Proof of Corollary~\ref{cor:KrAD_update_PD}}
        \label{app:proof_KrAD_update_PD}
        \corKrADupdatePD*
        
        \begin{proof}
            It is enough to consider the matrix
            \begin{align*}
                \L_k &= \L_{k-1}-\frac{1}{t_k}\L_{k-1}\G_{k}\R_{k-1}\G_{k}^\top \L_{k-1}\\
                &= \L_{k-1}^{1/2}(\I-\M_k)\L_{k-1}^{1/2}
            \end{align*}
            so that by \pref{property:domination_quadratic} we only need to examine the middle factor
            \begin{align*}
                \I-\M_k.
            \end{align*}
            If this matrix is positive definite, then $\L_k \succ 0$. Since we have already proven this in Proposition~\ref{prop:middle_term}, we are done.
        \end{proof}

\subsection{Proof of Lemma~\ref{lem:trace_bound}}
    \lemtracebound*
    \begin{proof}
        Suppose w.l.o.g. that $k$ is an iteration in which we have updated $\L_k$. Let $\widetilde\N_k=\left(\G_{k} \R_{k\!-\!1}\G_{k}^\top\right)^{1/2}$, $\N_k=\frac{1}{t_k^{1/2}}\widetilde\N_k$, $\T_{k}=\N_k\L_{k-1}\N_k$, and $\widetilde\T_{k}=\widetilde\N_k\L_{k-1}\widetilde\N_k$. Now, $\T_k =\frac{1}{t_k}\widetilde\T_{k}$ and $t_k\ge 1+\|\widetilde\T_{k}\|_2$. First note the following generalized binomial expansion (or equivalently, the infinite geometric series),
        \begin{align*}
            (\I-\M_k)^{-1} = \sum_{i=0}^\infty \M_k^i,
        \end{align*}
        which converges if $\Vert\M_k\Vert_2< \1$. Since we have shown this in our proof of Proposition~\ref{prop:middle_term}, we may use the series expansion.
        Now note that
        \begin{align*}
            &\B_{k} =[\L_{k\!-\!1}^{1/2}(\I \!-\! \M_k)\L_{k\!-\!1}^{1/2}]^{-1}\\
            &= \L_{k\!-\!1}^{-1/2}(\I \!-\! \M_k)^{-1}\L_{k\!-\!1}^{-1/2}\\
            &= \L_{k\!-\!1}^{-1/2}\sum_{i=0}^{\infty}\M_k^{i}\L_{k\!-\!1}^{-1/2} & \because\|\M_k\|<1\\
            &= \L_{k\!-\!1}^{-1} + \frac{1}{t_k}\G_{k} \R_{k\!-\!1}\G_{k}^\top+\! \frac{1}{t_k^2}\G_{k} \R_{k\!-\!1}\G_{k}^\top\L_{k\!-\!1}\G_{k} \R_{k\!-\!1}\G_{k}^\top \!+\! \ldots \\
            &= \B_{k\!-\!1} + \N_k\sum_{i=0}^{\infty}( \N_k\L_{k\!-\!1}\N_k)^{i}\N_k,
        \end{align*}
        Then
        \begin{align*}
            &\tr(\B_{k}) \!=\! \tr(\B_{k\!-\!1}) \!+\! \tr(\N_{k}\sum_{i=0}^{\infty} \T_k^{i}\N_k)\\
            \Rightarrow &\tr(\B_{k}) \!-\! \tr(\B_{k\!-\!1})\le \|\N_k^2\|_2\sum_{i=0}^{\infty} \tr\left(\T_k^{i} \right)\\
            &\le \|\N_k^2\|_2(m+\sum_{i=1}^{\infty} \|\T_k\|_2^{i-1}\tr(\T_k)) & \because \pref{property:trace_power}\\
            &=\|\N_k^2\|_2\left(m+\frac{\tr(\widetilde\T_k)}{\|\widetilde\T_k\|_2}\sum_{i=1}^{\infty} \|\T_k\|_2^{i}\right)\\
            &=\|\N_k^2\|_2\left(m+\frac{\tr(\widetilde\T_k)}{\|\widetilde\T_k\|_2}\sum_{i=1}^{\infty} \left(\frac{\|\widetilde\T_k\|_2}{t_k}\right)^{i}\right)\\
            &\le \|\N_k^2\|_2\left(m+\frac{\tr(\widetilde\T_k)}{\|\widetilde\T_k\|_2}\sum_{i=1}^{\infty} \left(\frac{\|\widetilde\T_k\|_2}{1+\|\widetilde\T_k\|_2} \right)^{i}\right) & \because t_k\ge 1+\|\widetilde\T_{k}\|_2\\
            &= \|\N_k^2\|_2\left(m+\frac{\tr(\widetilde\T_k)}{\|\widetilde\T_k\|_2}\sum_{i=1}^{\infty} \left(\frac{\|\widetilde\T_k\|_2}{1+\|\widetilde\T_k\|_2} \right)^{i}\right)\\
            &= \|\N_k^2\|_2\left(m+\tr(\widetilde\N_k\L_{k-1}\widetilde\N_k)\right),
        \end{align*}
        where the last line follows by evaluating the convergent geometric series. Applying 1-Lipschitz loss, we have
        \begin{align*}
            \tr(\widetilde\N_k\L_{k-1}\widetilde\N_k) &\le \|\L_{k-1}\|_2\tr(\widetilde\N_k^2) \because \pref{property:trace_circular} \& \pref{property:trace_sub_cauchy_schwartz} \\
            &=\|\L_{k-1}\|_2\tr(\G_k\R_{k-1}\G_k^\top) \\
            &=\|\L_{k-1}\|_2\tr(\R_{k-1}\G_k^\top\G_k) \\
            &\le \|\L_{k-1}\|_2 \|\R_{k-1}\|_2 \tr(\G_k^\top\G_k)\\
            &=\|\L_{k-1}\|_2 \|\R_{k-1}\|_2 \| \G_k\|_F^2\\
            &\le m\|\L_{k-1}\|_2 \|\R_{k-1}\|_2
        \end{align*}
        where the last line applies equivalence of matrix norms in addition to the 1-Lipschitz loss. Further,
        \begin{align*}
            \|\N_k^2\|_2 &=\frac{\|\G_k\R_{k-1}\G_k^T\|_2}{t_k}\\
            &\le \frac{\|\G_k\R_{k-1}\G_k^T\|_2}{1+\|\widetilde\T_k\|_2}\\
            &\le \|\G_{k} \R_{k\!-\!1}\G_{k}^\top\|_2 \\
            &\le \| \G_k\|_2 \|\R_{k-1}\|_2 \|\G_k^\top\|_2 \\
            &\le \|\R_{k-1}\|_2,
        \end{align*}
        where the last line again uses 1-Lipschitz loss. Putting this together,
        \begin{align*}
            &\tr(\B_{k}) \!-\! \tr(\B_{k\!-\!1})\le  m\|\R_{k-1}\|_2\left(1+\|\L_{k-1}\|_2\|\R_{k-1}\|_2\right)
        \end{align*}
        By telescoping we get
        \begin{align*}
            \tr(\B_{k}) &\!-\! \tr(\B_{0})\le  \sum_{i\in\Ical}m\|\R_{i}\|_2\left(1+\|\L_{i}\|_2\|\R_{i}\|_2\right),
        \end{align*}
        where $\Ical$ is the set of indices for which $\L_k$ was updated. Then $|\Ical|=k'$. Finally, since we know that $\|\L_k\|_2\le\|\L_0\|_2$ and $\|\R_k\|_2\le\|\R_0\|_2$,
        \begin{align*}
            \tr(\B_{k}) &\!-\! \tr(\B_{0})\le k' m\|\R_0\|_2\left(1+\|\L_0\|_2\|\R_0\|_2\right)
        \end{align*}
    \end{proof}

    \subsection{Proof of Theorem \ref{thm:krad_regret}}
    \label{app:proof_krad_regret}
    \begin{proof}
            Plugging in $(\R_{k,1},\L_{k,2})\!=\!(\I_m,\I_n)$ into \eqref{eq:trace1}-\eqref{eq:trace2} and again using the trace H\"{o}lder inequality~(P6) 
            yields\par
            {\centering
                $\begin{aligned}[b]\tr(\B_{k}^{1/4}) &\le \left(m^3\tr(\B_{0}) \!+\! km^4\left(1 \!+\! \|\L_0\|_2\right)\right)^{1/4} \\
                \tr(\C_{k}^{1/4}) &\le \left(n^3\tr(\C_{0}) \!+\! kn^4\left(1 \!+\! \|\R_0\|_2\right)\right)^{1/4}\end{aligned}$.
            \par}
            The term in the regret $\tr(\B_{k}^{1/4})\!\tr(\C_{k}^{1/4})$ is bounded by\par
            {\centering
                $(am^3 \!+\! m^4 bk)^{1
                /4}(cn^3 \!+\! n^4 dk)^{1/4}=O(mn\sqrt{k})$,
            \par}
            where $a,b,c,d$ are constants w.r.t. $m,n,k,k'$ that can be expressed in terms of $\tr(\B_0)$, $\tr(\C_0)$, $\|\L_0\|_2$, and $\|\R_0\|_2$.
        \end{proof}
    
    \section{Implementation details}
    \label{app:implementation}
    We provide a few additional implementation details here.

\subsection{KrADagrad}
\label{app:implementation_kradagrad}
We describe the algorithm for the alternate update scheme, which we dub KrADagrad.
\begin{algorithm}[ht]
       \caption{KrADagrad}
       \label{alg:KrADagrad}
        \begin{algorithmic}
            \STATE {\bfseries Input:} Parameters $\W_0\in\Rbb^{m\times n}$, iterations $K$, update set $\Ical$, step size $\eta$, exponent $\alpha=1/2$
            \STATE {\bfseries Initialize:} $(\L_0, \R_0)=(\I_m, \I_n)$
            \FOR{$k=1, \ldots, K$}
                \STATE Obtain gradient $\g_k$
                \IF{$k\in \Ical$}
                    \STATE Compute $\Delta\L_{k}=\frac{1}{t_k}\L_{k-1}\G_{k}\R_{k-1}\G_{k}^\top \L_{k-1}$
                    \STATE Update $\L_k \leftarrow \L_{k-1}-\Delta\L_{k}$
                \ELSE
                    \STATE Compute $\Delta\R_{k}=\frac{1}{t_k}\R_{k-1}\G^\top_{k}\L_{k-1}\G_{k}\R_{k-1}$
                    \STATE Update $\R_k \leftarrow \R_{k-1}-\Delta\R_{k}$
                \ENDIF
                \STATE Approximate $(\L_k^{\alpha}, \R_k^{\alpha})$ from $(\L_k, \L^{\alpha}_{k\!-\!1}, \R_k, \R^{\alpha}_{k\!-\!1})$
                \STATE Apply preconditioned gradient step
                    \begin{center}
                        $\W_k = \W_{k-1} - \eta\L^{\alpha}_k\G_k\R^{\alpha}_k$
                    \end{center}
            \ENDFOR
        \end{algorithmic}
    \end{algorithm}

\subsection{Delayed updates}
    For KrADagrad$^\star$, we may choose an interval at which to update the preconditioners from the statistics. In this case, we may also use additional memory to batch the updates and reduce the FLOPs needed to compute the statistics to be equal to those needed for Shampoo. To derive the delayed update, consider the modification
    \begin{align*}
        \widetilde\Q_{k+\kappa} &= \Q_{k} + \frac{1}{nt_{k+\kappa}}\sum_{i=1}^{\kappa}\g_{k+i}\g_{k+i}^\top\\
        \Rightarrow \Delta\B_{k+\kappa} &= \frac{1}{nt_{k+\kappa}}\sum_{i=1}^{\kappa}\G_{k+i}\R_{k}\G_{k+i}^\top.
    \end{align*}
    Then, using our extra memory to accumulate $\Delta\B_{k+\kappa}$, we can take
    \[
        t_{k+\kappa}=1 + \tr\left(\L_k \sum_{i=1}^{\kappa}\G_{k+i}\R_{k}\G_{k+i}^\top \right)
    \]
    and 
    \begin{align*}
        \Delta\L_{k+\kappa} &= \frac{1}{nt_{k+\kappa}}\L_{k}\sum_{i=1}^{\kappa}\G_{k+i}\R_{k}\G_{k+i}^\top\L_{k}.
    \end{align*}

\subsection{Matrix Roots}
\label{app:implementation_matrix_roots}

    Matrix roots can be computed using numerical linear algebra methods (e.g. SVD) or iteratively. We covered the use of SVD in the main text, so we discuss iterative methods here. Using parenthesized superscript to denote the iterations, the simplest of these takes the form
    \begin{align*}
        \X^{(k+1)} = f(\X^{(k)}),
    \end{align*}
    where $f$ is a polynomial. More sophisticated methods introduce additional helper matrices. For example, one well-known method for iteratively computing the matrix square root and its inverse without matrix inversion is the Newton-Schulz (NS) method. The NS iteration is given as starting from $\Y^{(0)}=\A, \Z^{(0)}=\I$ and iterating
    \begin{align}
        \begin{cases}
            \X^{(k)} &= \frac{1}{2}(3\I-\Z^{(k)}\Y^{(k)})\\
            \Y^{(k+1)} &= \Y^{(k)}\X^{(k)}\\
            \Z^{(k+1)} &= \X^{(k)}\Z^{(k)}
        \end{cases}
    \end{align}
    until convergence. The values at convergence are $\Y^{(k)}\rightarrow\A^{1/2}, \Z^{(k)}\rightarrow\A^{-1/2}$. The iterations are derived from the so-called ``matrix sign function'' \cite{higham08functions}.
    
    While the NS method converges quadratically, information about the matrix to be rooted $\A$ is encoded not in the iterations themselves or in any invariants, but only in the initialization, making accurate warm starting of the estimate impossible (as far as we are aware). We could also use the numerically stable coupled Newton method for $p$-th roots of $\A$. However, these iterations involve matrix inversions, which are also fairly expensive, even using Newton iterations.
    Instead, we modify the also numerically stable coupled Newton's method for $-p$-th roots and compute $\A^{1/p}=\A(\A^{-1/p})^{p-1}$ \cite{higham08functions}. While it may seem that this requires high precision to compute (as $\A^{-1/p}$ is exactly what Shampoo computes), the difference is that in multiplying by $\A$, the impact of inaccuracies in small eigenvalues remains small.
    Initializing $(\X^{(0)}, \M^{(0)})=(\I, \A)$, we arrive at
    \begin{align}
    \label{eq:coupled_newton_iteration}
        \begin{cases}
            \N^{(k)} &= \frac{1}{p}\left((p+1)\I-\M^{(k)}\right)\\
            \Y^{(k)} &= \N^{(k)}\X^{(k)} \\
            \X^{(k+1)} &= \frac{1}{2}\left(\Y^{(k)} + \Y^{(k)\top}\right)\\
            \M^{(k+1)} &= (\N^{(k+1)})^{p}\M^{(k)},
        \end{cases}
    \end{align}
    where $\X^{(k)}\rightarrow \A^{-1/p}, \M^{(k)}\rightarrow \I$.
    These iterations also appear not to encode information about $\A$; however, for $p=2\phi$, invariant in the iterations is $\M^{(k)}=(\X^{(k)})^{\phi}\A(\X^{(k)})^{\phi}$. For even integers $p$, this gives us exact guidance on how to initialize the iterations with a good guess $\widehat\X\approx\A^{-1/p}$ that leads to the desired fixed point: $(\X^{(0)}, \M^{(0)})=(\widehat\X, \widehat\X^{\phi}\A\widehat\X^{\phi})$. We note that if $\N^{(k)}$ and $\X^{(k)}$ commute, the step of symmetrizing $\Y^{(k)}$ is unnecessary; in practice, due to numerics and especially if we use a warm start, this step ensures that the sequence $\X^{(k)}$ stays symmetric. These iterations also require commutativity of $\A$ and $\X^{(k)}$, and thus non-commutativity may still be a source of divergence. Of course, this initialization procedure now requires additional storage of $\A^{-1/p}$ in between preconditioning computation steps, so we may still opt to use the prescribed initialization depending on the compute-memory requirements of our hardware and software system.

\subsection{Diagonal damping}
\label{app:implementation_damping}
    For numerical reasons, Adagrad and Shampoo add a small scaled identity $\epsilon\I$ prior to matrix inversion (i.e. to avoid singularity). For KrADagrad, while not necessary for numerical stability, we find that adding damping in some cases helps reduce the effect of noise in the gradient estimate on the preconditioner. We hypothesize that it reduces the effect of noise in the direction of eigenvectors with small eigenvalues. Assuming $\epsilon\L_k \prec \I$, we have
    \begin{align*}
    (\epsilon\I + \B_k)^{-1} &= (\epsilon\L_k + \I)^{-1}\L_k\\
    &\succeq\L_k(\I-\epsilon\L_k)\\
    \Rightarrow \left(\L_k(\I-\epsilon\L_k)\right)^{-1} &\succeq \epsilon\I + \B_k.
    \end{align*}
    Then letting $\L'_k = \L_k(\I-\epsilon\L_k)$, this looks like
    \begin{align*}
    \left(\L'_{k} - \frac{1}{t'_k}\L'_{k}\G\R_k\G_k^\top\L'_k\right)^{-1} \succeq
    \epsilon\I + \B_{k} + \frac{1}{t}\G_k\R_{k}\G_k^\top.
    \end{align*}
    This implies that the update should be computed with $\L_k$ while the damped preconditioner should be computed with the modified $\L'_k$. However, as a computational savings, we can incorporate $\L'_k$ into both the update and the preconditioner computations, and suffer negligible error as long as $\epsilon$ is set small enough.

\subsection{Generalization to tensors}
\label{app:implementation_tensors}
    Extending the algorithm to tensors is an exercise in bookkeeping. We first introduce some additional terminology and notation that is standard in the literature for analysis of tensors as well as practical implementation of tensor algebra \cite{lu13multilinear,gupta18shampoo,ren21tensor, paszke17automatic}. For the following, consider two tensors $\A_1$ and $\A_2$ with $D_1$ and $D_2$ dimensions, respectively, 
    \begin{align*}
        \A^{(i)} \in \Rbb^{n_1 \times \ldots \times n_{D_i}}.
    \end{align*}
    \begin{enumerate}
        \item Let $\mathcal{D}_i\subset \{1,...,D_i\}$ be ordered sets $(d_{i,1}, \ldots, d_{i,p})$ with $p=|\mathcal{D}_1|=|\mathcal{D}_2|$. Define the \textit{tensordot} to be such that the element with index $i_1,\ldots,i_{D_1+D_2-2p}$ is
            \begin{align*}
                &[\textrm{TD}(\A^{(1)},\A^{(2)},\mathcal{D}_1, \mathcal{D}_2)]_{i_1,\ldots,i_{D_1+D_2-2p}} \\
                &\quad= \sum_{s_p=1}^{n_{d_{1,p}}}\ldots\sum_{s_1=1}^{n_{d_{1,1}}} \A^{(1)}_{i_1,\ldots,i_{d_{1,1}-1}, s_1, i_{d_{1,1}+1}, \ldots, i_{d_{2,p}-1}, s_p, i_{d_{2,p}+1},\ldots}\A^{(2)}_{i_1,\ldots,i_{d_{2,1}-1}, s_1, i_{d_{2,1}+1}, \ldots, i_{d_{2,p}-1}, s_p, i_{d_{2,p}+1},\ldots},
            \end{align*}
            which is a summation over the indices in the sets $\mathcal{D}_1$ and $\mathcal{D}_2$ of the broadcasted product aligned over the indices specified by $\mathcal{D}_1$ and $\mathcal{D}_2$. Note this is only well defined if $n_{d_{1,k}}=n_{d_{2,k}}$ for $k=1,\ldots,p$. The tensordot is a multidimensional generalization of matrix multiplication and traces. To see this, suppose $\A^{(1)}$, $\A^{(2)}$ are matrices and $\x$, $\y$ are column vectors,
            \begin{align*}
                &\textrm{TD}(\x,\y, [1], [1]) = \x^\top \y\\
                &\textrm{TD}(\A^{(1)},\x, [2], [1]) = \A^{(1)}\x\\
                &\textrm{TD}(\A^{(1)},\A^{(2)}, [2], [1]) = \A^{(1)}\A^{(2)}\\
                &\textrm{TD}(\A^{(1)},\A^{(2)}, [1], [1]) = \A^{(1)\top}\A^{(2)}\\      
                &\textrm{TD}(\A^{(1)},\A^{(2)}, [1,2], [1,2]) = \tr\left(\A^{(1)\top}\A^{(2)}\right) = \A^{(1)} \cdot \A^{(2)}
            \end{align*}
        \item Take $\mathcal{D}_{\backslash d}=[1,\ldots,D]\backslash d$. Let the \textit{contraction} of a tensor be the matrix of size $n_d \times n_d$ resulting from
            \begin{align*}
                \textrm{contr}_d(\A) = \textrm{TD}(\A,\A,\mathcal{D}_{\backslash d},\mathcal{D}_{\backslash d})
            \end{align*}
        \item For $1\le d\le D$ and matrix $\L_d\in\Rbb^{n_d \times m}$, let the $d$-th tensor-matrix product be
            \begin{align*}
                \A \times_d \L_d = \textrm{TD}(\L_d, \A, [2], [d]).
            \end{align*}
    \end{enumerate}

    \begin{algorithm}[tb]
       \caption{KrADagrad$^\star$ for tensors}
       \label{alg:KrADagrad_star_tensor}
        \begin{algorithmic}
            \STATE {\bfseries Input:} Parameter tensor $\W_0\in\Rbb^{n_1\times \cdots \times n_D}$ of order $D$, iterations $K$, step size $\eta$, damping $\epsilon$, exponent $\alpha = 1/2$
            \STATE {\bfseries Initialize:} $\L_{0,d}=\I_{n_d}$
            \FOR{$k=1, \ldots, K$}
                \STATE Obtain gradient $\G_k$
                \FOR{$d=1, \ldots, D$}                    
                    \STATE Compute
                        \begin{center}
                            $\Delta\L_{k,d}=\frac{1}{t_{k,d}}\L_{k\!-\!1,d}\textrm{contr}_d(\G_{k}) \L_{k\!-\!1,d}$
                        \end{center}
                    \STATE Update $\L_{k,d} \leftarrow \L_{k-1,d}-\Delta\L_{k,d}$
                    \STATE Compute $\L^{\alpha/D}_{k,d}$ from $\L_{k,d}$
                \ENDFOR
                \STATE Apply preconditioned gradient step
                    \begin{center}
                        $\W_k = \W_{k-1} - \eta\G_k \times_1\L^{\alpha/D}_{k,1} \cdots \times_D \L^{\alpha/D}_{k,D}$
                    \end{center}
            \ENDFOR
        \end{algorithmic}
    \end{algorithm}

    \section{Properties and extended derivations}
    \label{app:derivations}
    Here we list the main linear algebra properties utilized in derivations and proofs. We then extend the derivations as a mini-tutorial in matrix manipulation for maximum clarity.

\subsection{Linear algebra properties}
    Matrix shapes are assumed to be conforming when left unspecified. 
    \begin{property}
    \label{property:trace_permute}
        $\A\cdot\B = \B \cdot \A$.
    \end{property}
    
    \begin{property}
    \label{property:trace_circular}
        \pref{property:trace_permute} implies $\tr(\A\B\C) = \tr(\B\C\A)$.
    \end{property}
    
    \begin{property}
    \label{property:trace_PSD_pos_inner_prod}
        If $\A,\B\succ 0$, then $\A\cdot\B > 0$.
    \end{property}
    
    \begin{property}
    \label{property:trace_ge_2norm}
        If $\A\succeq0$, then $\|\A\|_2 \le \tr(\A)$.
    \end{property}
    
    \begin{property}
    \label{property:trace_sub_cauchy_schwartz}
        If $\A,\B\succeq 0$, then $\A\cdot\B \le \|\A\|_2\tr(\B)$.
    \end{property}
    
    \begin{property}
    \label{property:trace_holder}
        If $\A,\B\succ 0$, then for $p,q$ such that $1/p+1/q=1$, $\A^{1/p}\cdot\B^{1/q} \le (\tr(\A))^{1/p}(\tr(\B))^{1/q}$.
    \end{property}
    
    \begin{property}
    \label{property:trace_power}
        \pref{property:trace_ge_2norm} and \pref{property:trace_sub_cauchy_schwartz} together imply that for $\A\succeq 0$, $\tr(\A^i) \le \|\A\|_2^{i-1} \tr(\A) \le \tr(\A)^i \quad \textrm{for } i \ge 1$.
    \end{property}
    
    \begin{property}
    \label{property:frob_spectral}
        \pref{property:trace_sub_cauchy_schwartz} implies that for $\A,\B \succeq 0$, $\|\A\B\|_F \le \|\A\|_2 \|\B\|_F$.
    \end{property}

    \begin{property}\label{property:domination_operator_monotone}
        If $\A\succeq \B\succ 0$, then for $0\le p\le 1$, $\A^p\succeq\B^p$.
    \end{property}
    
    \begin{property}\label{property:domination_quadratic}
        If $\A,\B,\C\succeq 0$, then $\A\succeq\B \Rightarrow \C\A\C^\top\succeq\C\B\C^\top$.
    \end{property}

    \begin{property}\label{property:domination_diagonal}
        If diagonal matrices $\D\succeq\C\succ 0$, then $\C^{-1}\succeq\D^{-1}$.
    \end{property}
    
    \begin{property}\label{property:domination_sum_inverse}
        \pref{property:domination_diagonal} implies that if $\B\succeq 0, \C \succ 0$, then $\C^{-1}\succeq(\C+\B)^{-1}$.
    \end{property}
    
    \begin{property}
    \label{property:trace_kron}
        $\tr(\A\otimes\B) = \tr(\A)\tr(\B)$.
    \end{property}
    
    \begin{property}
    \label{property:kron_prod}
        $(\A\otimes\B)(\C\otimes\D) = (\A\C)\otimes(\B\D)$.
    \end{property}

    \begin{property}\label{property:kron_prod_distributive}
        $\A\otimes (\B+\C)=\A\otimes \B+\A\otimes \C$.
    \end{property}
    
    \begin{property}\label{property:kron_power} If $\A,\B \succeq0$, then $(\A\otimes\B)^{r} \!=\! \A^{r}\otimes\B^{r} \; \forall r\in\Rbb$.
    \end{property}
    
    \begin{property}\label{property:kron_transpose} $(\A\otimes\B)^{\top}=\A^{\top}\otimes\B^{\top}$.
    \end{property}
    
    \begin{property}\label{property:kron_vec} $(\C^\top\otimes\A)\vec(\B)=\vec(\A\B\C)$.
    \end{property}

\subsection{Additional and extended derivations}
    \subsubsection{Proof of~\pref{property:domination_quadratic}}
        For arbitrary $\x$, let $\y=\C^\top \x$. We have,
        \begin{align*}
            \x^\top \C \A \C^\top \x &= \y^\top \A \y \\
            &\ge \y^\top \B \y & \because \A\succeq\B \\
            &=\x^\top \C \B \C^\top \x
        \end{align*}

    \subsubsection{Proof of~\pref{property:domination_sum_inverse}}
    Let the SVD of $\C^{-1/2}\B\C^{-1/2} = \V\boldsymbol{\Sigma}\V^\top$. Then,
    \begin{align*}
        \C^{-1} &= \C^{-1/2}\I\C^{-1/2} \\
        &= \C^{-1/2}\V\I\V^\top\C^{-1/2}  & \because \V^{-1} = \V^\top\\
        &\succeq \C^{-1/2}\V(\I+\boldsymbol{\Sigma})^{-1}\V^\top\C^{-1/2} & \because \pref{property:domination_quadratic}\&\pref{property:domination_diagonal}\\
        &=\C^{-1/2}(\V\V^\top+\V\boldsymbol{\Sigma}\V^\top)^{-1}\C^{-1/2}\\
        &= \C^{-1/2}(\I+\C^{-1/2}\B\C^{-1/2})^{-1}\C^{-1/2}\\
        &= (\C+\B)^{-1}
    \end{align*}
    \subsubsection{Extended derivation of KrADagrad update}
        We can use Woodbury directly on the inverse before applying KrAD,
        \begin{align*}
            \widetilde\Q_{k}^{-1} &= (\Q_{k-1} + \frac{1}{nt_k}\g_k \g_k^\top)^{-1} \\
            &= \Q_{k-1}^{-1} - \frac{1}{nt_k (1+\g_k^\top\Q_{k-1}^{-1}\g_k)}\Q_{k-1}^{-1} \g_k \g_k^\top\Q_{k-1}^{-1} \\
            \Rightarrow \Q_{k-1}^{-1} - \widetilde\Q_{k}^{-1} &= \frac{1}{nt_k (1+\tr(\G_k^\top\L_{k-1}\G_k\R_{k}))}\Q_{k-1}^{-1} \g_k \g_k^\top\Q_{k-1}^{-1}. 
        \end{align*}
        Now using KrAD on the RHS above,
        \begin{align*}
             \frac{1}{n}\Q_{k-1}^{-1} \g_k \g_k^\top\Q_{k-1}^{-1} &= \frac{1}{n}\vec(\L_{k-1} \G_k\R_{k}) \vec(\L_{k-1} \G_k\R_{k})^\top\\
             &\preceq \R_k \otimes (\L_{k-1}\G_k\R_{k}\R_{k}^{-1}\R_{k}\G_k^\top\L_{k-1})\\
             &= \R_k \otimes (\L_{k-1}\G_k\R_{k}\G_k^\top\L_{k-1})
        \end{align*}
        so that
        \begin{align*}
             \widetilde\Q_{k}^{-1} \succeq \R_{k}\otimes\L_{k-1} - \frac{1}{t_k (1+\tr(\G_k^\top\L_{k-1}\G_k\R_{k}))}\R_k \otimes (\L_{k-1}\G_k\R_{k}\G_k^\top\L_{k-1}).
        \end{align*}
        Then letting
        \[
            \Delta\L_{k}=\frac{1}{1+\tr(\G_k^\top\L_{k-1}\G_k\R_{k})}\L_{k-1}\G_k\R_{k}\G_k^\top\L_{k-1},
        \]
        take $t_k=1$ and $\L_k=\L_{k-1}-\Delta\L_{k-1}$, and we have
        \begin{align*}
             \Q_k^{-1} = \R_k\otimes\L_{k} \preceq \widetilde\Q_k^{-1}\preceq \left(\frac{1}{n}\Gcal_k\Gcal_k^\top\right)^{-1}.
        \end{align*}
    \subsubsection{Alternative derivation of KrADagrad update}
        We can also start with
        \begin{align*}
            &\G_k^T \L_{k-1}\G_k \succeq 0\\
            \Rightarrow&\G_k^T \L_{k-1}\G_k+\C_k \succeq \C_k\\
            \Rightarrow &\C_k^{-1}\succeq (\G_k^T \L_{k-1}\G_k+\C_k)^{-1}, &\because \pref{property:domination_sum_inverse}
        \end{align*}
        which implies that
        \begin{align*}
             &\! \frac{1}{t_k}\L_{k\!-\!1}\G_k(\C_{k})^{-1}\G_k^{\top}\L_{k\!-\!1} \succeq \! \frac{1}{t_k}\L_{k\!-\!1}\G_k(\C_{k}\!+\!\frac{1}{t_k}\G_k^{\top}\L_{k\!-\!1}\G_k)^{-1}\G_k^{\top}\L_{k\!-\!1} & \because \pref{property:domination_quadratic}
        \end{align*}
        and thus
        \begin{align*}
             - & \frac{1}{t_k}\L_{k\!-\!1}\G_k(\C_{k}\!+\!\frac{1}{t_k}\G_k^{\top}\L_{k\!-\!1}\G_k)^{-1}\G_k^{\top}\L_{k\!-\!1} \succeq - \frac{1}{t_k}\L_{k\!-\!1}\G_k(\C_{k})^{-1}\G_k^{\top}\L_{k\!-\!1}
        \end{align*}

    \section{Memory and compute cost analysis}
    \label{app:costs}
    \subsection{Kradagrad: statistic update}
    For Kradagrad, we can actually cache some of the computations from the statistic update for later use in applying the preconditioner. Computing $\Delta\L_k$ has a computational cost of $2N(m+n)+4m^2+4m^3\le 4m(2n^2+n)$ from the following:
    \begin{enumerate}
        \item $\G_k\R_{k-1}^{1/2}\rightarrow 2mn^2 = 2Nn$
        \item $\G_k\R_{k-1}^{1/2}(\G_k\R_{k-1}^{1/2})^{\top}\rightarrow 2m^2 n = 2Nm$
        \item $\L_{k-1}(\G_k\R_{k-1}\G_k^\top)\rightarrow 2m^3$
        \item $t_k = 1 + \|\L_{k-1}\G_k\R_{k-1}\G_k^\top\|_F\rightarrow 2m^2$
        \item $\Delta\L_k = (\L_{k-1}\G_k\R_{k-1}\G_k^\top)\L_{k-1}\rightarrow 2m^3$
        \item $\frac{1}{t_k}\Delta\L_k \rightarrow m^2$
        \item $\L_{k-1} - \frac{1}{t_k}\Delta\L_k \rightarrow m^2$
    \end{enumerate}
    Similarly, computing $\Delta\R_k$ has a computational cost upper bounded by $2N(m+n)+4n^2+4n^3 \le 4n(2n^2+n)$. Thus, using a simple alternating scheme, the per update computation cost is bounded by the average, $2(m+n)(2n^2+n)\rightarrow O(n^3)$.
    
    The memory cost is $m^2+n^2\rightarrow O(n^2)$ beyond gradient descent. We may cache $\G_k\R_{k-1}^{1/2}$ replacing $\G_k$ in memory, which incurs no additional storage cost.
\subsection{Kradagrad: matrix square root}
    The cost comes from NS iterations or SVD. For each NS iteration, the cost is $O(n^3)$, and the SVD also has cost $O(n^3)$.

\subsection{Kradagrad: applying preconditioner}
    In iterations where we update $\L_k$, using the cached $\G_k\R_{k-1}^{1/2}$, we merely need to compute
    \[
        \L_{k-1}^{1/2}(\G_k\R_{k-1}^{1/2})\rightarrow 2m^2 n = 2Nm.
    \]
    For iterations in which we update $\R_k$, we similarly have a cost of $2Nn$. Again using a simple alternating scheme, the average cost is then $(m+n)N\rightarrow O(n^3)$.

\subsection{Kradagrad$^{\star}$: statistics updates}
    For Kradagrad$^{\star}$, computing $\Delta\L_k$ has a computational cost of $(2Nm+2m^2+4m^3)= 2m^2(n+2m+1)$ from the following:
    \begin{enumerate}
        \item $\G_k\G_k^\top\rightarrow 2m^2 n = 2Nm$
        \item $\L_{k-1}(\G_k\G_k^\top)\rightarrow 2m^3$
        \item $\|\L_{k-1}\G_k\G_k^\top\|_F\rightarrow 2m^2$
        \item $(\L_{k-1}\G_k\G_k^\top)\L_{k-1}\rightarrow 2m^3$
    \end{enumerate}
    Similarly, computing $\Delta\R_k$ has a computational cost of $(4Nn+2n^2+2n^3)= 2n^2(n+2m+1)$ from the following:
    \begin{enumerate}
        \item $\R_k\G_k^\top\rightarrow 2mn^2 = 2Nn$
        \item $(\R_{k-1}\G_k^\top)\G_k\rightarrow 2mn^2 = 2Nn$
        \item $\|\R_{k-1}\G_k^\top\G_k\|_F\rightarrow 2n^2$
        \item $(\R_{k-1}\G_k^\top\G_k)\R_{k-1}\rightarrow 2n^3$
    \end{enumerate}
    Thus the total per update compute cost is $2(m^2+n^2)(n+2m+1)\rightarrow O(n^3)$.
    
    Again the memory cost is straightforward at $m^2+n^2\rightarrow O(n^2)$.
\subsection{KrADagrad$^{\star}$: matrix $p$-th roots}
    The cost comes from the coupled Newton iterations. The cost of a couple Newton iteration is $O(n^3)$, and the SVD also has cost $O(n^3)$.
    
\subsection{Kradagrad$^{\star}$: applying preconditioners}
    The cost is
    \begin{enumerate}
        \item $\L_k^{1/2p}\G_k\rightarrow 2m^2 n = 2Nm$
        \item $(\L_k^{1/2p}\G_k)\R_k^{1/2p}\rightarrow 2mn^2 = 2Nn$
    \end{enumerate}
    for a total of $2(m+n)N\rightarrow O(n^3)$.

    \section{Other Kronecker approximations}
    \label{app:other_kron}
    Here we describe a traditional Kronecker approximation that may point in a direction for further research.
Given matrix $\Ucal\in\Rbb^{mn\times r}$, we wish to minimize the Frobenius norm
\begin{align}
\label{eq:LR_optimization}
    (\L,\R) = \underset{\L',\R'}{\argmin} \left\|\R'\otimes\L' - \Ucal\Ucal^\top\right\|_F^2.
\end{align}
Note that there is an equivalence class for solutions, i.e. if $(\B, \C)$ is a solution, then so is $(\alpha\B, \frac{1}{\alpha}\C)$ for $\alpha\ne 0$. One solution is related to taking the singular vectors corresponding to the largest singular value of a permuted and reshaped matrix~\cite{vanloan93approximation}. Alternatively, for large matrices, we can apply alternating minimization to obtain iterative updates for $\L^{(t+1)}$ at iteration $t+1$ given $\R^{(t)}$ at iteration $t$ and vice versa,
\begin{align}
    \L^{(t+1)} &= \underset{\L}{\argmin} \left\|\R^{(t)}\otimes\L - \Ucal\Ucal^\top \right\|_F^2\\
    \R^{(t+1)} &= \underset{\R}{\argmin} \left\|\R\otimes\L^{(t+1)} - \Ucal\Ucal^\top \right\|_F^2.
\end{align}
Under mild conditions, this process converges to a local minimum~\cite{tseng01convergence}. We can choose to perform half an iteration, iterate until convergence, or anywhere in between.

Letting $\Ucal_i$ be the $i$-th column of $\Ucal$, and $\U_i=\vec^{-1}_{m,n}(\Ucal_i)$, note that
\begin{align*}
    &\tr\left( \Ucal^\top(\R^{(t)}\otimes\L) \Ucal\right)\\
    &\quad =\sum_{i=1}^{r}\Ucal_i^{\top}(\R^{(t)}\otimes\L) \Ucal_i\\
    &\quad =\sum_{i=1}^{r}\Ucal_i^{\top}\vec(\L \U_i \R^{(t)}) \\
    &\quad =\sum_{i=1}^{r}\tr(\L \U_i \R^{(t)} \U_i^{\top} )
\end{align*}
Then, we can rewrite and take the derivative of the objective function (dropping irrelevant constants) and set it to zero to solve for the partial optimum,
\begin{align*}
    J(\B) &=-2 \tr\left(\Ucal\Ucal^\top (\R^{(t)}\otimes\L)\right) + \tr((\R^{(t)\top}\R^{(t)})\otimes(\L^\top\L)) \\
        &= -2 \sum_{i=1}^{r}\tr(\L \U_i \R^{(t)} \U_i^{\top} ) + \tr(\L^\top\L)\tr(\R^{(t)\top}\R^{(t)})\\
    \Rightarrow \frac{\partial J}{\partial \B} &= 2\left\|\R^{(t)}\right\|_F^2\L -2\sum_{i=1}^{r} \U_i \R^{(t)} \U_i^\top = 0 \\
    \Rightarrow \L^{(t+1)} &= \frac{1}{\left\|\R^{(t)}\right\|_F^2}\sum_{i=1}^{r} \U_i \R^{(t)} \U_i^{\top}.
\end{align*}
Similarly,
\begin{align*}
    \R^{(t+1)} &= \frac{1}{\left\|\L^{(t+1)}\right\|_F^2}\sum_{i=1}^{r} \U_i^\top \L^{(t+1)} \U_i.
\end{align*}
If we use traditional approximation to form the basis of a preconditioner, the resulting updates, while simple in form, would not satisfy the analogous properties in Equations~\eqref{eq:shampoo_dominate_grad}-\eqref{eq:shampoo_dominate_prev} that would guarantee optimal regret. Thus, the traditional Kronecker approximation requires another set of properties that lead to optimal regret.

    \section{Experiment Details}
    \label{app:addl_exp}
    \label{app:experiments}

\subsection{Synthetic}

We form $\A=\U\boldsymbol{\Lambda} \U^\top$, where $\U^\top$ is a random orthonormal matrix (generated by taking the $Q$ from a $QR$ decomposition of a square matrix with i.i.d. $\mathcal{N}(0, 1)$ entries), and $\boldsymbol{\Lambda}$ is a diagonal matrix with entries $\lambda_i=10^{(\sqrt{10}-\sqrt{10}(i-1)/n)^2}$ for $i=1,\ldots,n$ (where the square/square root nonlinearities in the exponent make the spread in the top eigenvalues less disparate while not decreasing the overall condition number). We repeat a similar process for $\B$. We can see that the optimal is $\X=\0$ and that the Hessian of the loss function w.r.t. $\x=\vec(\X)$ is $2\B\otimes \A$ with a condition number of $\kappa(\B\otimes\A)=\kappa(\A)\kappa(\B)=10^{20}$, which is exceptionally badly conditioned. We found that for the preconditioned methods an interval of 3 steps in between updating the preconditioner performed best.

\begin{figure}[t]
\begin{center}
\includegraphics[width=0.5\textwidth]{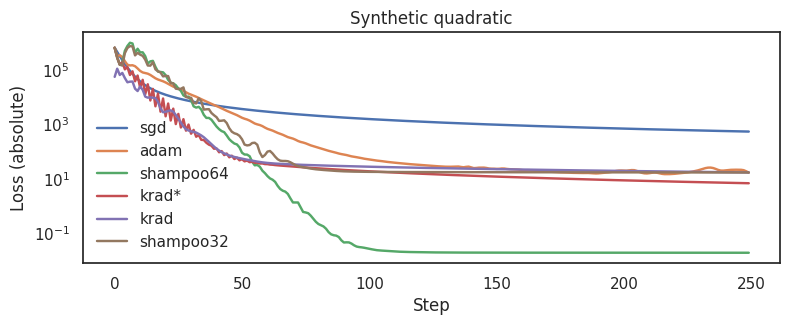}
\end{center}
\caption{Loss on synthetic quadratic (we do not observe pathologically bad behavior)}
\label{fig:quad_abs}
\end{figure}

\begin{center}
\begin{table}[!h]
\caption{Selected learning rate (epsilon) for synthetic quadratic.\label{table:synthetic}}
\centering
\begin{tabular}{ |c|c|c|c|c| } 
 \hline
 \multicolumn{5}{|c|}{Synthetic} \\
 \hline
 SGD & Adam & Shampoo & KrAdagrad$^\star$ & KrAdagrad \\
 \hline
 1e-3, - & 1e0, 1e-8 & 5e-3, 1e-8 & 5e-3, 1e-8 & 1e-2, 1e-8 \\
 \hline
\end{tabular}
\end{table}
\end{center}
The learning rate was swept over the range [1e-3, 2e-3, 5e-3, 1e-2, 2e-2, 5e-2, 1e-1, 2e-1, 5e-1, 1e0] for all methods, while epsilon was swept log uniformly with 6 steps over the range [1e-3, 1e-8] for all methods.

\subsection{Vision Experiments}

We train a ResNet32 model on CIFAR-10 and a ResNet56 model on CIFAR-100, following an implementation by \citep{idelbayev18aproper}, modified to be without (BN).

\begin{figure}[t]
\includegraphics[width=0.5\textwidth]{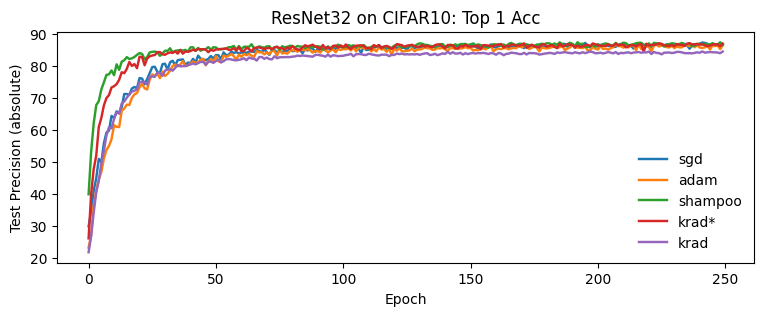}\includegraphics[width=0.5\textwidth]{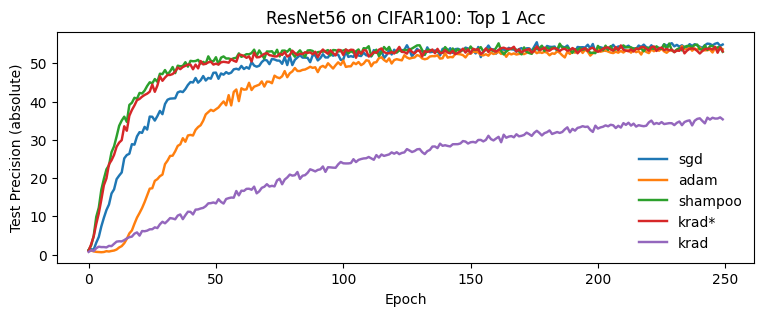}
\caption{Left: Top 1 Accuracy of ResNet-32 (without BN) on CIFAR-10; Right: Top 1 Accuracy of ResNet-56 (without BN) on CIFAR-100. We do not observe pathologically bad behavior.}
\label{fig:resnet_abs}
\end{figure}

Figure \ref{fig:resnet_abs} shows the absolute performance of these two models on the respective datasets they were trained on. The learning rate was swept over the range [1e-4, 2.5e-4, 1e-3, 2.5e-3, 1e-2, 2.5e-2, 1e-1] for preconditioned methods and [1e-3, 2e-3, 5e-3, 1e-2, 2e-2, 5e-2, 1e-1] for SGD and Adam, while epsilon was swept log uniformly with 6 steps over the range [1e-1, 1e-6] for all methods, and the batch size was set to $128$ for all methods.
\begin{center}
\begin{table}[!h]
\caption{Selected learning rate (epsilon) for ResNet-32/56 on CIFAR-10/100 comparisons.\label{table:resnet_hps}}
\centering
\begin{tabular}{ |c|c|c|c|c| } 
 \hline
 \multicolumn{5}{|c|}{ResNet-32 on CIFAR-10} \\
 \hline
 SGD & Adam & Shampoo & KrAdagrad$^\star$ & KrAdagrad \\
 \hline
 5e-3, - & 5e-3, 1e-1 & 1e-2, 1e-4 & 1e-2, 1e-4 & 2.5e-2, 1e-4 \\
 \hline
 \multicolumn{5}{|c|}{ResNet-56 on CIFAR-100} \\
 \hline
 SGD & Adam & Shampoo & KrAdagrad$^\star$ & KrAdagrad \\
 \hline
 1e-2, - & 5e-3, 1e-3 & 1e-2, 1e-4 & 1e-2, 1e-4 & 1e-2, 1e-4 \\
 \hline
\end{tabular}
\end{table}
\end{center}




\subsection{Autoencoder Experiments}

We closely follow \citep{goldfarb20practical} Appendix D.2 Tables 3 and 4 in defining the datasets and hyperparameters for the autoencoder benchmarks. We first perform a hyperparameter search for each optimizer/dataset pair over the learning rate and epsilon parameters. For all optimizers and datasets we sample the learning rate log uniformly between $10^{-5}$ and $2$, and sample epsilon log uniformly on the intervals $[10^{-10}, 10^{-1}]$ and $[10^{-4},1]$ for Adam and 2nd order optimizers, respectively.  For each dataset we run 30 trials for SGD and $\sim 90$ trials for the other optimizers. We use a batch size of 100, momentum of 0.9 for SGD, and Shampoo and KrAdagrad$^\star$ use block sizes of 100, 250, and 500 for the CURVES, MNIST, and FACES datasets, respectively. The best performing hyperparameters are provided in Table \ref{table:ae_hps}. After finding the best HPs from a single seed, we retrain with the selected HPs on five new, unique seeds shared across optimizers. See the raw learning curves in Figure \ref{fig:ae_seeds} and averages in Figure \ref{fig:ae}.

\begin{figure*}
    \centering
    \includegraphics[width=0.33\textwidth]{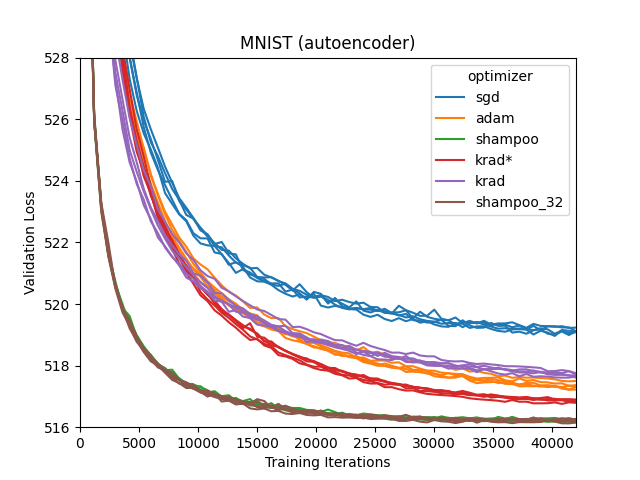}
    \includegraphics[width=0.33\textwidth]{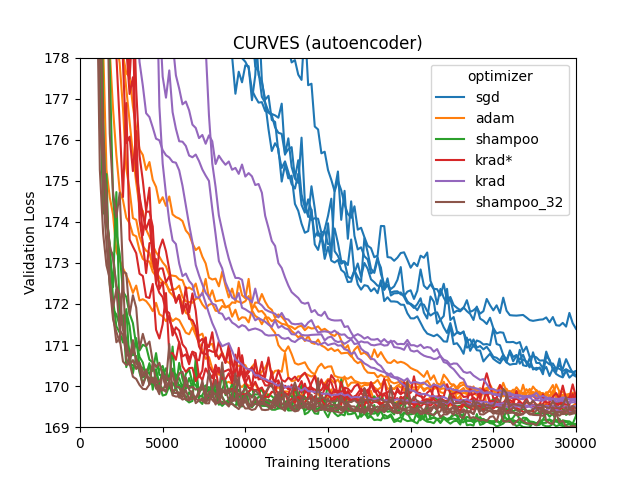}
    \includegraphics[width=0.33\textwidth]{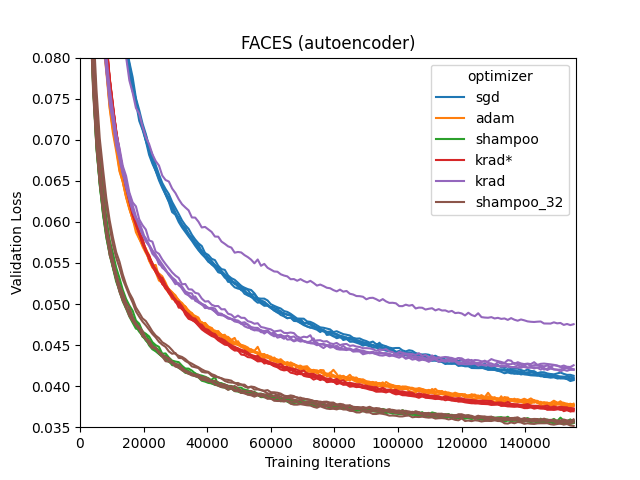}
    \caption{Reconstruction validation error curves for the best HPs for fully connected auto encoder a) mean cross entropy on MNIST b) mean cross entropy on CURVES c) mean squared error on FACES.  All 5 seeds are shown per optimizer.}
    \label{fig:ae_seeds}
\end{figure*}

\begin{center}
\begin{table}[!h]
\caption{Selected learning rate (epsilon) for autoencoder comparisons.\label{table:ae_hps}}
\centering
\begin{tabular}{ |c|c|c|c|c| } 
 \hline
 \multicolumn{5}{|c|}{MNIST} \\
 \hline
 SGD & Adam & Shampoo & KrAdagrad* & KrAdagrad \\ 
 \hline
 5.2e-3 , - & 3e-4 , 1e-4 & 1e-3 , 1e-2 & 1.4e-3 , 3e-4 & 6.3e-3 , 8.5e-4 \\ 
 \hline
 \multicolumn{5}{|c|}{CURVES} \\
 \hline
 SGD & Adam & Shampoo & KrAdagrad* & KrAdagrad \\ 
 \hline
 1e-3 , - & 1e-3 , 1e-3 & 1.9e-3 , 5.8e-4 & 3e-3 , 3.2e-2 & 1.3e-2 , 5e-3 \\ 
 \hline
 \multicolumn{5}{|c|}{FACES} \\
 \hline
 SGD & Adam & Shampoo & KrAdagrad* & KrAdagrad \\ 
 \hline
 1e-1 , - & 3e-3 , 1e-3 & 3.1e-1 , 3.6e-3 & 1.9e-1 , 2e-4 & 3.2e-1 , 1e-4 \\ 
 \hline
\end{tabular}
\end{table}
\end{center}

\subsection{Recommender Experiment}

For the MovieLens-20M experiment we start from the existing H+Vamp Gated baseline\footnote{https://github.com/psywaves/EVCF} and modify it to compare the optimizers of interest.  Using the default hyperparameters otherwise, we find that learning rates greater than or equal to ~4e-4 cause the training loss to diverge for most of the optimizers.  As a result we use the following learning rate grid search, and report the best performing rates in \ref{table:recsys_lrs}:

\begin{itemize}
  \item  SGD, Adam (norm grad)[
  \begin{itemize}
   \item learning rates 1.25e-5, 2.5e-5, 5e-5, 1e-4, 2e-4, 4e-4]
   \item epsilon 1e-6 (Adam only)
  \end{itemize}
  \item Adam, Shampoo, KrAdagrad*, KrAdagrad
  \begin{itemize}
   \item learning rates [6.25e-6, 1.25e-5, 2.5e-5, 5e-5, 1e-4 2e-4]
   \item epsilon 1e-4 (1e-3 for KrAdagrad)
  \end{itemize}
\end{itemize}

\begin{center}
\begin{table}[h!]
\caption{Selected learning rate (epsilon) for MovieLens 20M experiment.\label{table:recsys_lrs}}
\centering
\begin{tabular}{ |c|c|c|c|c|c| } 
 \hline
 SGD & Adam & Shampoo & KrAdagrad* & KrAdagrad & Adam (norm grad) \\
 \hline
 2e-4 & 2e-4 & 2e-4 & 2e-4 & 2-e4 & 4e-4 \\
 \hline
\end{tabular}
\end{table}
\end{center}

For this experiment, we instead tune against the training loss (that which is optimized directly) after discovering that the equivalent validation loss function weights the loss components differently, as described in Figure \ref{fig:hvamp}.

\subsection{Continual Learning Experiments}
In the Permuted MNIST benchmark \citep{permuted_mnist}, the learner trains for 2000 steps, seeing a new pixel permutation of the images every 100 steps, resulting in 20 sequential non-overlapping experiences.  Every 100 steps the Top 1 Accuracy is evaluated against all 20 experiences, including those not yet seen, yielding an aggregate measure of model accuracy in both current and prior domains. For the Split CIFAR-100 benchmark, the learner trains on exactly 5 unseen classes at a time every interval of 2500 training steps, evaluating Top 1 Accuracy against all 20 experiences after each interval, until reaching 50000 training steps. Starting with the GEM and LaMAML baselines \footnote{https://github.com/ContinualAI/continual-learning-baselines} from~\cite{lomonaco2021avalanche}, we train with 3 unique seeds shared across optimizers, with learning curves of Top 1 Accuracy  averaged across seeds for the best HPs in Figure \ref{fig:pmnist_lamaml_actual}. We share the same curves relative to SGD in Figure \ref{fig:pmnist-lamaml}a, and relative to Adam in Figure \ref{fig:pmnist-lamaml}b. We perform the following grid searches to select a best learning rate per optimizer (see Table \ref{table:gem_lrs} for the selected learning rates):

\begin{itemize}
  \item SGD, Adam
  \begin{itemize}
   \item learning rates [1e-1, 5e-2, 2e-2, 1e-2, 5e-3, 2e-3, 1e-3, 5e-4, 2e-4, 1e-4]\footnote{the three smallest rates were included only for tuning Adam for LaMAML}
   \item epsilon 1e-6 (Adam only)
  \end{itemize}
  \item Shampoo, KrAdagrad, KrAdagrad*
  \begin{itemize}
   \item learning rates [1, 2.5e-1, 1e-1, 2.5e-2, 1e-2]
   \item epsilon 1e-4
  \end{itemize}
\end{itemize}

\begin{center}
\begin{table}[!h]
\caption{Selected learning rates for continual learning comparisons.\label{table:gem_lrs}}
\centering
\begin{tabular}{ |c|c|c|c|c| }
\hline
 \multicolumn{5}{|c|}{GEM on Permuted MNIST} \\
 \hline
 SGD & Adam & Shampoo & KrAdagrad* & KrAdagrad \\ 
 \hline
 5e-3 & 1e-3 & 1e-2 & 1e-2 & 1e-2 \\ 
 \hline
 \multicolumn{5}{|c|}{LaMAML on Split CIFAR100} \\
 \hline
 SGD & Adam & Shampoo & KrAdagrad* & KrAdagrad \\ 
 \hline
 2e-2 & 1e-4 & 1e-2 & 1e-2 & 1e-2 \\
 \hline
\end{tabular}
\end{table}
\end{center}
Additionally, for both experiments we use a batch size of 10, a block size of 128 for the 2nd order optimizers, and beta=0.9 for SGD with momentum.
\begin{figure}[t]
\includegraphics[width=0.5\textwidth]{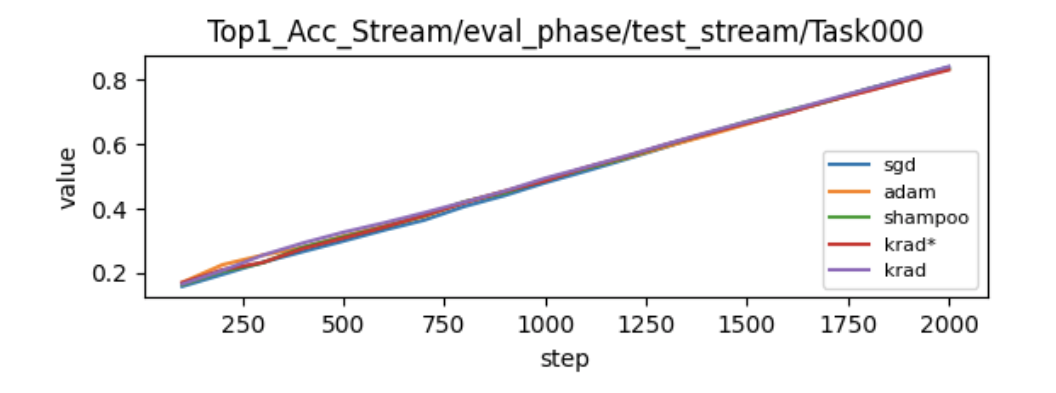}\includegraphics[width=0.5\textwidth]{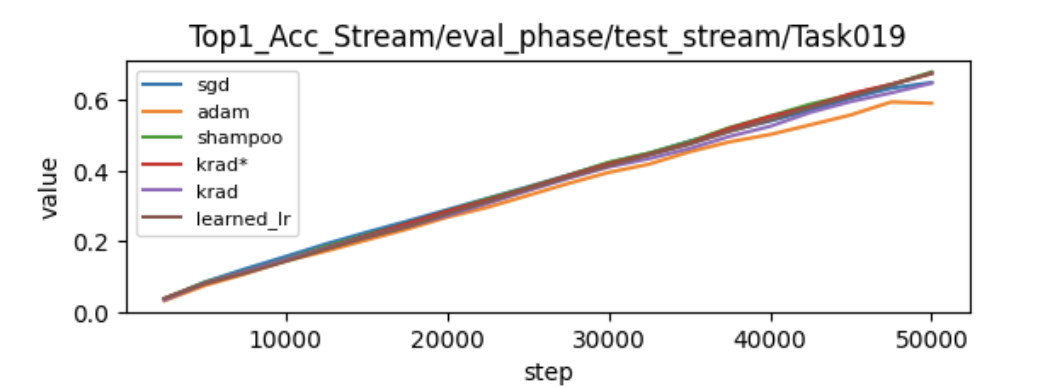}
\caption{Left: Top 1 Accuracy of GEM on Permuted MNIST; Right: Top 1 Accuracy of LaMAML on Split CIFAR-100. We do not observe pathologically bad behavior. The learning curves are extremely similar; hence the relative curves presented in the main body.}
\label{fig:pmnist_lamaml_actual}
\end{figure}

    \stopcontents[appendices]

\end{document}